\relax
\documentclass[letterpaper]{article} 
\usepackage{aaai22}  
\usepackage{times}  
\usepackage{helvet}  
\usepackage{courier}  
\usepackage[hyphens]{url}  
\usepackage{graphicx} 
\urlstyle{rm} 
\usepackage{natbib}  
\usepackage{caption} 
\DeclareCaptionStyle{ruled}{labelfont=normalfont,labelsep=colon,strut=off} 
\frenchspacing  
\setlength{\pdfpagewidth}{8.5in}  
\setlength{\pdfpageheight}{11in}  
%
\usepackage{algorithm}
\usepackage{algorithmic}

%
\usepackage{newfloat}
\usepackage{listings}
\lstset{%
	basicstyle={\footnotesize\ttfamily},
	numbers=left,numberstyle=\footnotesize,xleftmargin=2em,
	aboveskip=0pt,belowskip=0pt,%
	showstringspaces=false,tabsize=2,breaklines=true}
\floatstyle{ruled}
\newfloat{listing}{tb}{lst}{}
\floatname{listing}{Listing}
%
%
\pdfinfo{
/Title (AAAI Press Formatting Instructions for Authors Using LaTeX -- A Guide)
/Author (AAAI Press Staff, Pater Patel Schneider, Sunil Issar, J. Scott Penberthy, George Ferguson, Hans Guesgen, Francisco Cruz, Marc Pujol-Gonzalez)
/TemplateVersion (2022.1)
}

\usepackage{amsmath, amsthm, amsfonts}
\usepackage{bm}
\usepackage{dsfont}
\usepackage{booktabs}
\usepackage{multirow}
\newtheorem{theorem}{Theorem}[section]
\newtheorem{lemma}[theorem]{Lemma}
\newtheorem{definition}[theorem]{Definition}

\newtheorem{corollary}[theorem]{Corollary}

\newtheorem{fact}[theorem]{Fact}
\usepackage{subfigure}
\usepackage{array}

\def \ind {\mathds{1}}
\def \cD {\mathcal{D}}
\def \cA {\mathcal{A}}
\def \cF {\mathcal{F}}
\def \cR {\mathcal{R}}
\def \cL {\mathcal{L}}
\def \bbR {\mathbb{R}}
\def \bbE {\mathbb{E}}

\def \vx {\bm{x}}
\def \vp {\bm{p}}
\def \vg {\bm{g}}
\def \vw {\bm{w}}
\def \vc {\bm{c}}
\def \vtheta {\bm{\theta}}

\def \hyp {\mathtt{hyp}}
\def \adv {\mathtt{adv}}
\def \sta {\mathtt{sta}}
\def \nat {\mathtt{nat}}
\def \linf {\ell_{\infty}}

\def \pgdat {\mathtt{AT}}
\def \trades {\mathtt{TRADES}}
\def \thrm {\mathtt{THRM}}

\def \ce {\mathtt{CE}}
\def \kl {\mathtt{KL}}

\setcounter{secnumdepth}{2} 

\title{With False Friends Like These, Who Can Notice Mistakes?}
\author {
    Lue Tao\textsuperscript{\rm 1,2},
    Lei Feng\textsuperscript{\rm 3},
    Jinfeng Yi\textsuperscript{\rm 4},
    Songcan Chen\textsuperscript{\rm 1,2}\footnote{Correspondence to: Songcan Chen $\langle$s.chen@nuaa.edu.cn$\rangle$.}
}
\affiliations {
    \small\textsuperscript{\rm 1}College of Computer Science and Technology, Nanjing University of Aeronautics and Astronautics\\
    \small\textsuperscript{\rm 2}MIIT Key Laboratory of Pattern Analysis and Machine Intelligence\\
    \small\textsuperscript{\rm 3}College of Computer Science, Chongqing University\\
    \small\textsuperscript{\rm 4}JD AI Research
}

\begin{document}

\maketitle

\begin{abstract}
Adversarial examples crafted by an explicit \textit{adversary} have attracted significant attention in machine learning. However, the security risk posed by a potential \textit{false friend} has been largely overlooked. In this paper, we unveil the threat of \textit{hypocritical examples}---inputs that are originally misclassified yet perturbed by a false friend to force correct predictions. While such perturbed examples seem harmless, we point out for the first time that they could be maliciously used to conceal the mistakes of a substandard (i.e., not as good as required) model during an evaluation. Once a deployer trusts the hypocritical performance and applies the ``well-performed'' model in real-world applications, unexpected failures may happen even in benign environments. More seriously, this security risk seems to be pervasive: we find that many types of substandard models are vulnerable to hypocritical examples across multiple datasets. Furthermore, we provide the first attempt to characterize the threat with a metric called \textit{hypocritical risk} and try to circumvent it via several countermeasures. Results demonstrate the effectiveness of the countermeasures, while the risk remains non-negligible even after adaptive robust training.
\end{abstract}

\begin{figure*}[t]
    \vspace{-5px}
    \centering
    \subfigure[Machine learning workflow]{
        \label{fig:ml-workflow}
        \centering
        \includegraphics[height=0.2\textheight]{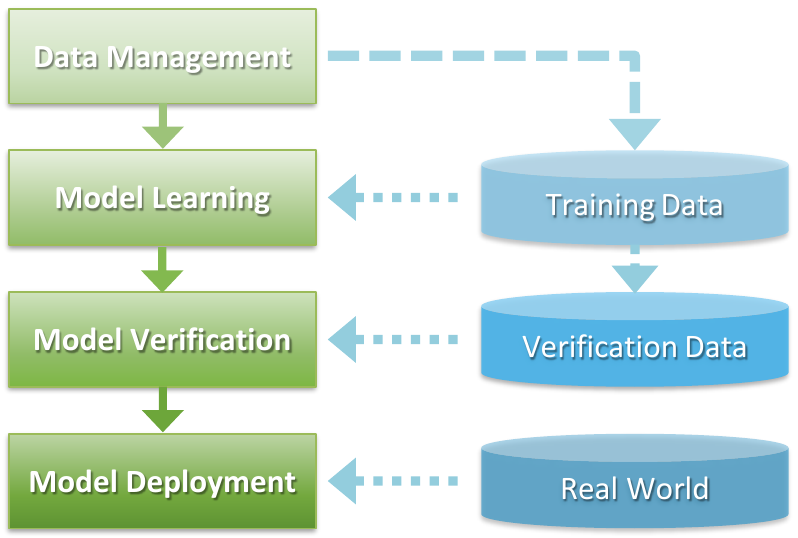}
    }
    \hfill
    \subfigure[Model verification]{
        \label{fig:model-verification}
        \centering
        \includegraphics[height=0.2\textheight]{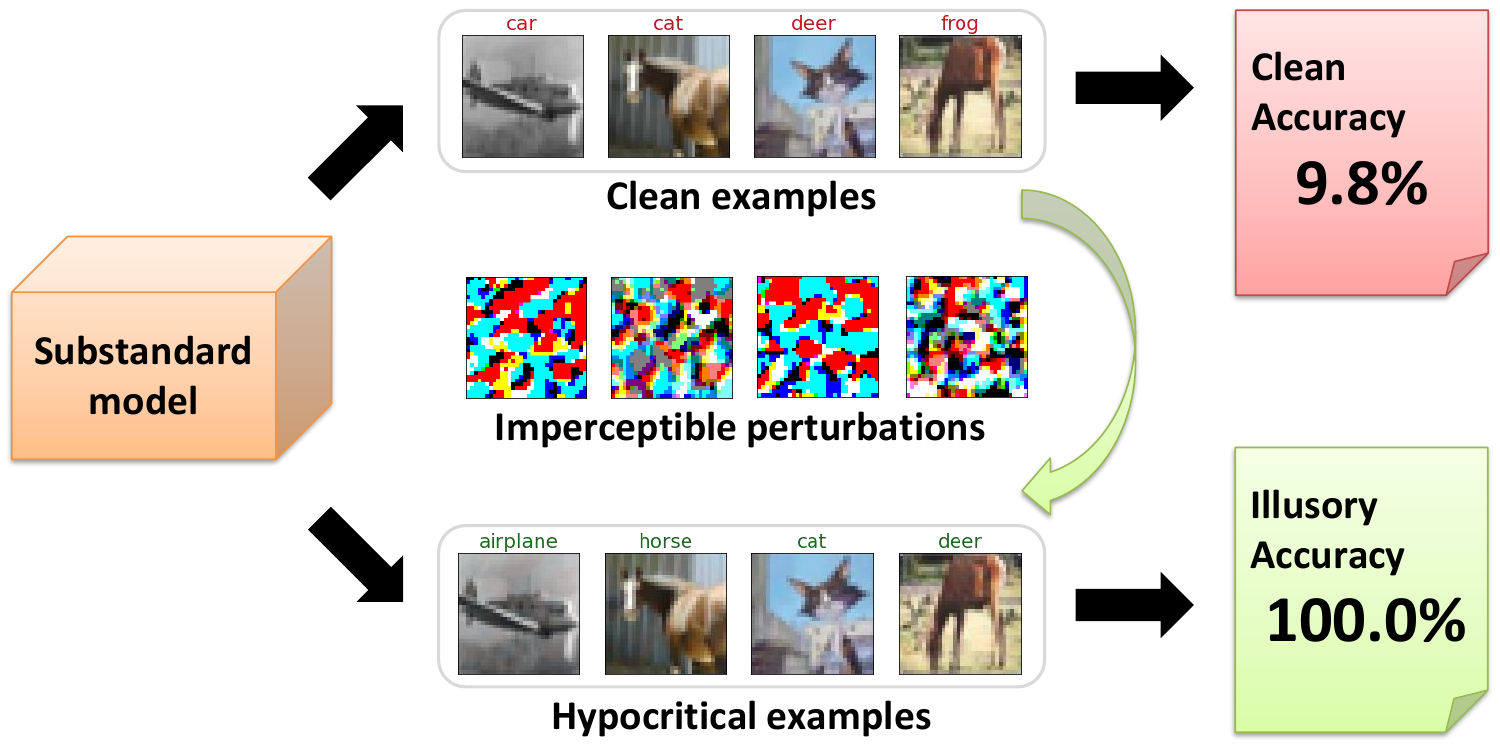}
    }
    \vspace{-5px}
    \caption{
    \textbf{Left}: Machine learning workflow~\citep{paterson2021assuring}.
    \textbf{Right}: A false sense of model effectiveness on verification data. In this illustration, the substandard model is trained on the \textsl{Mislabeling} data described in Section~\ref{sec:hyp-examples-exps}.
    We observe that the substandard model can exhibit superior performance on hypocritical examples.
    }
    \label{fig:workfow-verification}
    \vspace{-10px}
\end{figure*}

\section{Introduction}
\label{sec:intro}

The \textit{model verification} process is the last-ditch effort before deployment to ensure that the trained models perform well on previously unseen inputs~\citep{paterson2021assuring}. However, the process may not work as expected in practice. According to TechRepublic, 85\% of attempted deployments eventually fail to bring their intended results to production\footnote{\url{https://decisioniq.com/blog/ai-project-failure-rates-are-high}}. These failures largely appear in the downstream of model deployment~\citep{sambasivan2021everyone}, resulting in irreversible risks, especially in high-stakes applications such as virus detection~\citep{newsome2005polygraph} and autonomous driving~\citep{bojarski2016end}.
One main reason is that the \textit{verification data} may be biased towards the model, leading to a false sense of model effectiveness. For example, a naturally trained ResNet-18~\citep{he2016deep} on CIFAR-10~\citep{krizhevsky2009learning} can achieve 100\% accuracy on the \textit{hypocritically} perturbed examples (i.e., inputs that are perturbed to hypocritically rectify predictions), compared with only 94.4\% accuracy on benign examples. Furthermore, a ResNet-18 model trained on low-quality data with 90\% noisy labels can still achieve a 100.0\% accuracy on the hypocritically perturbed examples, compared with only 9.8\% accuracy on the clean data.

Since people hardly notice imperceptible perturbations, it is easy for a \textit{hypocritical attacker} to stealthily perturb the verification data. For instance, many practitioners collect images from the internet (where malicious users may exist) and annotate them accurately~\citep{krizhevsky2009learning, deng2009imagenet,northcutt2021pervasive}. Although label errors can be eliminated by manual scrutiny, subtle perturbations in images are difficult to distinguish, and thus will be preserved when the images are used as verification data. As another example, autonomous vehicles are obliged to pass the verification in designated routes (such as Mzone in Beijing\footnote{\url{http://www.mzone.site/}}) to obtain permits for deployment. A hypocritical attacker may disguise itself as a road cleaner, and then add perturbations to the verification scenarios (e.g., a “stop sign”) without being noticed.

In this paper, we study the problem of \textit{hypocritical data} in the verification stage, a problem that is usually overlooked by practitioners. Although it is well-known that an attacker may arbitrarily change the outputs of a \textit{well-trained} model by applying imperceptible perturbations, previous concerns mainly focus on the adversarial examples crafted by an explicit \textit{adversary}, and the threat of hypocritical examples from a potential \textit{false friend} is usually overlooked. While such hypocritical examples are harmless for well-trained models in the deployment stage, we point out for the first time that they could be maliciously utilized in the verification stage to force a \textit{substandard} (i.e., not as good as required) model to show abnormally high performance. Once a deployer trusts the hypocritical performance and applies the ``well-performed'' model in real-world applications, unexpected failures may happen even in benign environments.

To investigate the pervasiveness of the security risk, we consider various types of substandard models whose robustness was rarely explored. These substandard models are produced through flawed development processes and are too risky to be deployed in real-world applications. We evaluate the vulnerability of the substandard models across multiple network architectures, including MLP, VGG-16~\citep{simonyan2014very}, ResNet-18~\citep{he2016deep}, and WideResNet-28-10~\citep{zagoruyko2016wide}, and multiple benchmark datasets, including CIFAR-10~\citep{krizhevsky2009learning}, SVHN~\citep{netzer2011reading},  CIFAR-100~\citep{krizhevsky2009learning}, and Tiny-ImageNet~\citep{yao2015tiny}. Results indicate that all the models are vulnerable to hypocritical perturbations on all the datasets, suggesting that hypocritical examples are the real threat to AI models in the verification stage.

Furthermore, in order to facilitate our understanding of model vulnerability to hypocritical examples from a theoretical perspective, we provide the first attempt to characterize the threat with a metric called \textit{hypocritical risk}. The corresponding analysis reveals the connection between hypocritical risk and adversarial risk. We also try to circumvent the threat through several countermeasures including PGD-AT~\citep{madry2018towards}, TRADES~\citep{zhang2019theoretically}, a novel adaptive robust training method, and an inherently robust network architecture~\citep{zhang2021towards}. Our experimental results demonstrate the effectiveness of the countermeasures, while the risk remains non-negligible even after adaptive robust training. Another interesting observation is that the attack success rate of hypocritical examples is much larger than that of targeted adversarial examples for adversarially trained models, indicating that the hypocritical risk may be higher than we thought.

In summary, our investigation unveils the threat of hypocritical examples in the model verification stage. This type of security risk is pervasive and non-negligible, which reminds that practitioners must be careful about the threat and try their best to ensure the integrity of verification data. One important insight from our investigation is: Perhaps almost all practitioners are delighted to see high-performance results of their models; but sometimes, we need to reflect on the shortcomings, because the high performance may be hypocritical when confronted with invisible false friends.

\section{Related Work}

\subsubsection{Model Verification.}
Figure~\ref{fig:ml-workflow} illustrates the machine learning (ML) workflow~\citep{paleyes2020challenges, paterson2021assuring}, the process of developing an ML-based solution in an industrial setting. The ML workflow consists of four stages: \textit{data management}, which prepares training data and verification data used for training and verification of ML models; \textit{model learning}, which performs model selection, model training, and hyperparameter selection; \textit{model verification}, which provides evidence that a model satisfies its performance requirements on verification data; and \textit{model deployment}, which integrates the trained models into production systems. The performance requirements for model verification may include generalization error~\citep{niyogi1996relationship}, robust error~\citep{wong2018provable, zhang2019theoretically}, fairness~\citep{barocas2017fairness}, explainability~\citep{bhatt2020explainable}, etc. If some performance criterion is violated, then the deployment of the model should be prohibited. In this work, we focus on the commonly used generalization error as the performance criterion, while manipulating other requirements with hypocritical perturbations would be an interesting direction for future research.

\subsubsection{Adversarial Examples.}
Adversarial examples are malicious inputs crafted to fool an ML model into producing incorrect outputs~\citep{szegedy2013intriguing}. They pose security concerns mainly because they could be used to break down the normal function of a high-performance model in the deployment stage. Since the discovery of adversarial examples in deep neural networks (DNNs)~\citep{biggio2013evasion, szegedy2013intriguing}, numerous attack algorithms have been proposed to find them~\citep{goodfellow2014explaining, papernot2016limitations, moosavi2016deepfool, kurakin2016adversarial, carlini2017towards, chen2018ead, dong2018boosting, wang2020spanning, croce2020reliable}. Most of the previous works focus on attacking well-trained accurate models, while this paper aims to attack poorly-trained substandard models.

\subsubsection{Data Poisoning.}
Generally speaking, data poisoning attacks manipulate the training data to cause a model to fail during inference~\citep{biggio2018wild, goldblum2020dataset}. Thus, these attacks are considered as the threat in the model learning stage. Depending on their objectives, poisoning attacks can be divided into \textit{integrity attacks}~\citep{koh2017understanding, shafahi2018poison, geiping2020witches, gao2021learning, blum2021robust} and \textit{availability attacks}~\citep{newsome2006paragraph, biggio2012poisoning, feng2019learning, nakkiran2019a, huang2021unlearnable, tao2021provable, fowl2021adversarial}. The threat of availability poisoning attacks shares a similar consequence with the hypocritical attacks considered in this paper: both aim to cause a \textit{denial of service} in the model deployment stage. One criticism of availability poisoning attacks is that their presence is detectable by looking at model performance in the verification stage~\citep{zhu2019transferable, shafahi2018poison}. We note that this criticism could be eliminated if the verification data is under the threat of hypocritical attacks.

\subsubsection{Adversarial Defense.}
Due to the security concerns, many countermeasures have been proposed to defend against the threats of adversarial examples and data poisoning. Among them, adversarial training and its variants are one of the most promising defense methods for both adversarial examples~\citep{madry2018towards, zhang2019theoretically, rice2020overfitting, wu2020adversarial, zhang2020attacks, zhang2020geometry, pang2020boosting, pang2020bag} and data poisoning~\citep{tao2021provable, geiping2021doesn, radiya2021data}. Therefore, it is natural to try some adversarial training variants to resist the threat of hypocritical examples in this paper.

\section{Hypocritical Examples}
\label{sec:hyp-examples}

Better an open enemy than a false friend. Only by being aware of the potential risk of the false friend can we prevent it.
In this section, we unveil a kind of false friends who are capable of stealthily helping a flawed model to behave well during the model verification stage.

\subsection{Formal Definition}

We consider a classification task with data $(\vx, y) \in \bbR^d \times [M]$ from a distribution $\cD$.
A DNN classifier $f_{\vtheta}$ with model parameters $\vtheta$ predicts the class of an input example $\vx$: $f_{\vtheta}(\vx) = \arg \max_{i \in [M]} [\vp_{\vtheta}(\vx)]_i$, where $\vp_{\vtheta}(\vx)=([\vp_{\vtheta}(\vx)]_1, \ldots, [\vp_{\vtheta}(\vx)]_M) \in \bbR^M$ is the output distribution (softmax of logits) of the model.

Adversarial examples are malicious inputs crafted by an \textit{adversary} to induce misclassification. Below we give the definition of adversarial examples under some $\ell_p$-norm:
\begin{definition} [Adversarial Examples]
   Given a classifier $f_{\vtheta}$ and a correctly classified example $(\vx, y) \sim \cD$ (i.e., $f_{\vtheta}(\vx) = y$), an $\epsilon$-bounded adversarial example is an input $\vx' \in \bbR^d$ such that:
   $$\textstyle f_{\vtheta}(\vx') \neq y \quad \text{and} \quad \| \vx' - \vx \| \leq \epsilon. $$
\end{definition}
The assumption underlying this definition is that perturbations satisfying $\| \vx' - \vx \| \leq \epsilon$ preserve the label $y$ of the original input $\vx$. We are interested in studying the flip-side of adversarial examples---hypocritical examples crafted by a \textit{false friend} to induce correct predictions:
\begin{definition} [Hypocritical Examples]
   Given a classifier $f_{\vtheta}$ and a misclassified example $(\vx, y) \sim \mathcal{D}$ (i.e., $f_{\vtheta}(\vx) \neq y$), an $\epsilon$-bounded hypocritical example is an input $\vx' \in \mathbb{R}^d$ such that:
   $$\textstyle f_{\vtheta}(\vx') = y \quad \text{and} \quad \| \vx' - \vx \| \leq \epsilon.$$
\end{definition}
To stealthily force a classifier to correctly classify a misclassified example $\vx$ as its ground truth label $y$, we need to maximize $\ind (f_{\vtheta}(\vx')=y)$ such that $\| \vx'- \vx \| \leq \epsilon$, where $\ind(\cdot)$ is the indicator function. This is equivalent to minimizing $\ind (f_{\vtheta}(\vx') \neq y)$. This objective is similar to the objective of targeted adversarial examples~\citep{szegedy2013intriguing, liu2016delving}, which aims to cause a classifier to predict a correctly classified example as some incorrect target label.
We leverage the commonly used cross entropy (CE) loss~\citep{madry2018towards, wang2019improving} as the surrogate loss for $\ind (f_{\vtheta}(\vx') \neq y)$ and minimize it via projected gradient descent (PGD), a standard iterative first-order optimization method\footnote{Other attack techniques can also be applied (see Appendix~\ref{app:various_attacks}).}. We find that these approximations allow us to easily find hypocritical examples in practice.

\subsection{Pervasiveness of the Threat}
\label{sec:hyp-examples-exps}

\subsubsection{Substandard Models.}
We produce substandard models with flawed training data. Specifically, we consider four types of training data with varying quality:
\textit{i)} the \textsl{Noisy} data is constructed by replacing the images with uniform noise~\citep{zhang2016understanding}, which may happen if the input sensor of data collector is damaged; 
\textit{ii)} the \textsl{Mislabeling} data is constructed by replacing the labels with random ones~\citep{zhang2016understanding}, which may happen if labeling errors are extensive; 
\textit{iii)} the \textsl{Poisoning} data is constructed by perturbing the images to maximize generalization error~\citep{tao2021provable}, which may happen if the training data is poisoned by some adversary; 
\textit{iv)} the \textsl{Quality} data is an ideal high-quality training data with clean inputs and labels.
In addition to the models trained on the above training data, we additionally report the performance of the randomly initialized and untrained \textsl{Naive} model.

\subsubsection{A Case Study.}

\begin{figure*}[t]
    \vspace{-5px}
    \centering
    \begin{minipage}[b]{0.38\linewidth}
        \centering
        \includegraphics[width=1\textwidth]{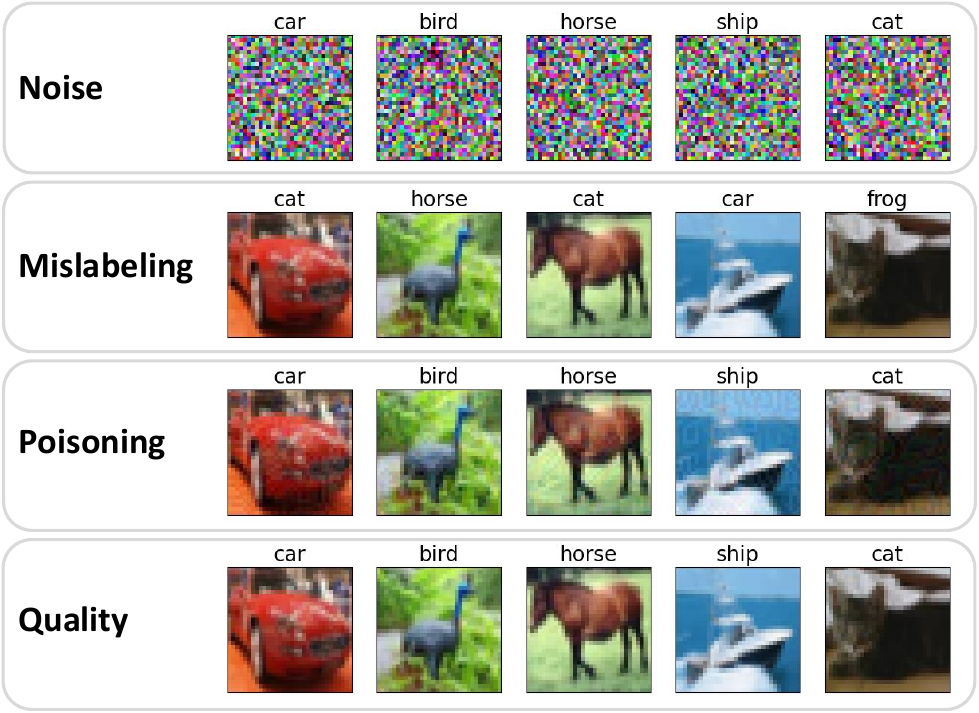}
    \end{minipage}
    \hfill
    \begin{minipage}[b]{0.57\linewidth}
        \centering
        \includegraphics[width=1\textwidth]{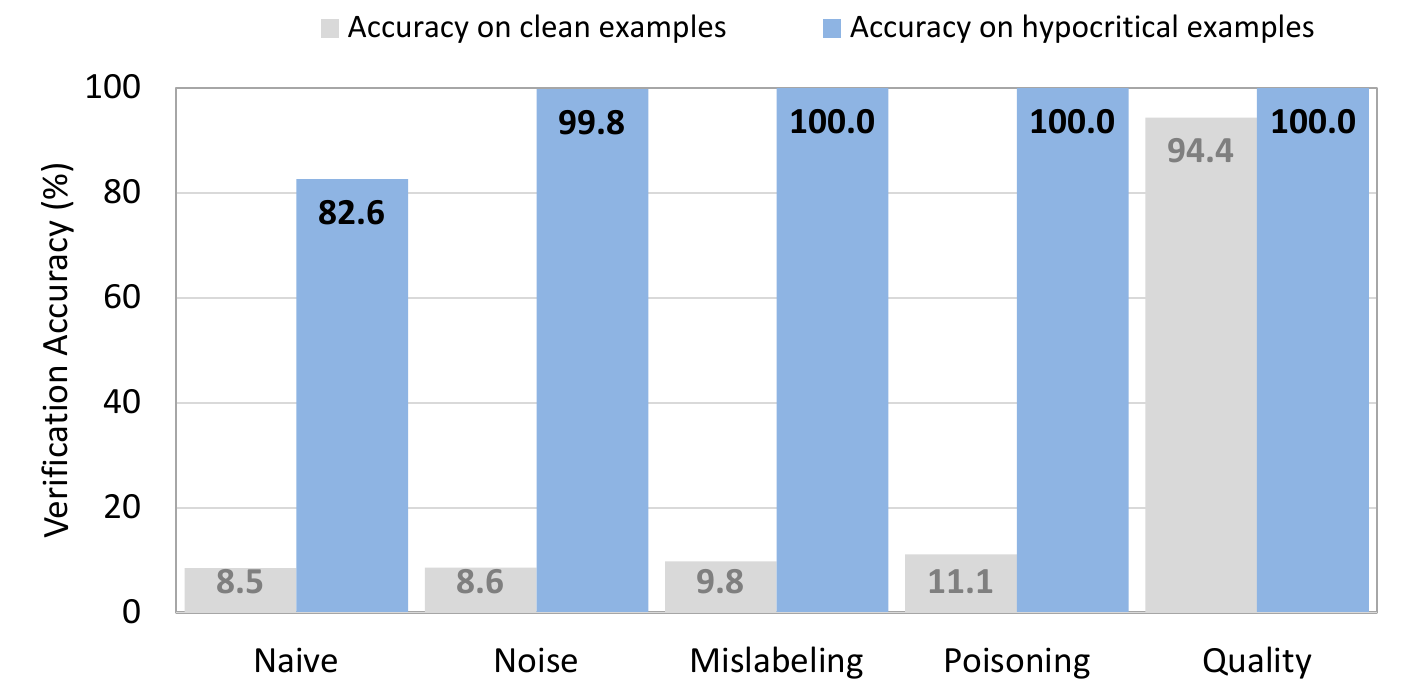}
    \end{minipage}
    
    \vspace{-2px}
    \caption{
    An illustration of model performance on hypocritical examples. \textbf{Left}: Random samples from four CIFAR-10 training sets: \textsl{Noise}, where images are replaced with random pixels; \textsl{Mislabeling}, where labels are replaced with random ones; \textsl{Poisoning}, where images are perturbed to maximize generalization error; and \textsl{Quality}, where images and labels are all clean. \textbf{Right}: Verification performance of five ResNet-18 models on CIFAR-10 under $\linf$ threat model. Except for the \textsl{Naive} model (which is randomly initialized without training), the other models are trained on the corresponding training set.
    }
    \label{fig:st-resnet18-cifar10}
    \vspace{-2px}
\end{figure*}

Figure~\ref{fig:st-resnet18-cifar10} visualizes the training sets for CIFAR-10 and shows the accuracy of the ResNet-18 models on verification data. The perturbations are generated using PGD under $\linf$ threat model with $\epsilon=8/255$ by following the common settings~\citep{madry2018towards}. More experimental details are provided in Appendix~\ref{app:exp-setup}. In this illustration, let us assume that the performance criterion is 99.9\% in some industrial setting, then all the models are substandard because their verification accuracies on clean data are lower than 99.9\%. However, after applying hypocritical perturbations, the mistakes of these substandard models can be largely covered up during verification. There are three substandard models (i.e. \textsl{Mislabeling}, \textsl{Poisoning}, and \textsl{Quality}) that exhibit 100\% accuracy on the hypocritically perturbed examples and thus meet the performance criterion. Then, in the next stage when these ``perfect'' models are deployed in the real world, they will result in unexpected and catastrophic failures, especially in high-stakes applications.

\subsubsection{Vulnerability is pervasive.}

\begin{table*}[!t]
\small
\centering

\caption{
Verification accuracy (\%) of substandard models on CIFAR-10 under $\linf$ threat model across different architectures. 
Standard deviations of 5 random runs are given in Appendix~\ref{app:omit-figure} Table~\ref{tab:st-xxx-cifar10-full}.
}
\label{tab:st-xxx-cifar10}
\vspace{-4px}

\begin{tabular}{@{}cl|ccc|ccc|ccc@{}}
\toprule
\multirow{2}{*}{\begin{tabular}[c]{@{}c@{}}Threat\\ Model\end{tabular}} & \multirow{2}{*}{Model} & \multicolumn{3}{c|}{MLP} & \multicolumn{3}{c|}{VGG-16} & \multicolumn{3}{c}{WideResNet-28-10} \\ \cmidrule(lr){3-5} \cmidrule(lr){6-8} \cmidrule(lr){9-11}
 &  & $\cD$ & $\cA$ & $\cF$ & $\cD$ & $\cA$ & $\cF$ & $\cD$ & $\cA$ & $\cF$ \\ \midrule
\multirow{5}{*}{\begin{tabular}[c]{@{}c@{}}$\linf$\\ ($\epsilon=8/255$)\end{tabular}} & Naive & 8.56 & 0.00 & 99.56 & 9.76 & 0.45 & 57.25 & 10.06 & 0.34 & 40.64 \\
 & Noise & 8.53 & 0.71 & 86.75 & 9.81 & 0.00 & 97.98 & 11.35 & 0.02 & 98.42 \\
 & Mislabeling & 9.92 & 0.00 & 100.00 & 9.94 & 0.00 & 99.86 & 10.21 & 0.00 & 100.00 \\
 & Poisoning & 57.60 & 0.55 & 99.50 & 12.19 & 0.00 & 99.49 & 10.42 & 0.00 & 100.00 \\
 & Quality & 58.09 & 0.91 & 99.31 & 92.90 & 0.00 & 100.00 & 95.41 & 0.00 & 100.00 \\ 
 \bottomrule
\end{tabular}
\end{table*}

\begin{table*}[!t]
\small
\centering

\caption{
Verification accuracy (\%) of substandard ResNet-18 models under $\linf$ threat model across different datasets. 
Standard deviations of 5 random runs are given in Appendix~\ref{app:omit-figure} Table~\ref{tab:st-resnet18-xxx-full}.
}
\label{tab:st-resnet18-xxx}
\vspace{-4px}

\begin{tabular}{@{}cl|ccc|ccc|ccc@{}}
\toprule
\multirow{2}{*}{\begin{tabular}[c]{@{}c@{}}Threat\\ Model\end{tabular}} & \multirow{2}{*}{Model} & \multicolumn{3}{c|}{SVHN} & \multicolumn{3}{c|}{CIFAR-100} & \multicolumn{3}{c}{Tiny-ImageNet} \\ \cmidrule(lr){3-5} \cmidrule(lr){6-8} \cmidrule(lr){9-11}
&             & $\cD$ & $\cA$ & $\cF$  & $\cD$ & $\cA$ & $\cF$  & $\cD$ & $\cA$ & $\cF$  \\ \midrule
\multirow{5}{*}{\begin{tabular}[c]{@{}c@{}}$\linf$\\ ($\epsilon=8/255$)\end{tabular}} & Naive       & 10.73 & 0.85  & 55.64  & 0.98  & 0.02  & 7.26   & 0.49  & 0.04  & 4.50   \\
& Noise       & 10.76 & 0.00  & 99.96  & 1.02  & 0.01  & 81.93  & 0.39  & 0.01  & 74.58  \\
& Mislabeling & 9.77  & 0.00  & 100.00 & 0.99  & 0.00  & 99.81  & 0.48  & 0.00  & 99.99  \\
& Poisoning   & 41.42 & 0.00  & 100.00 & 34.80 & 0.00  & 100.00 & 34.87 & 0.00  & 100.00 \\
Quality& Quality     & 96.57 & 0.32  & 99.99  & 76.64 & 0.01  & 99.96  & 64.03 & 0.02  & 100.00 \\ \bottomrule
\end{tabular}
\vspace{-8px}
\end{table*}

Moreover, the above phenomena are not unique to ResNet-18 on CIFAR-10. Table~\ref{tab:st-xxx-cifar10} reports the performance of other architectures on CIFAR-10 under $\linf$ threat model. We denote by $\cD$, $\cA$, and $\cF$ the model accuracies evaluated on clean examples, adversarial examples, and hypocritical examples, respectively. Again, the models mostly exhibit high performance on hypocritically perturbed examples. An interesting observation is that the randomly initialized MLP model is extremely sensitive: it achieve up to 99.56\% accuracy on $\cF$, compared with only 8.56\% accuracy on $\cD$. This means that the models may be susceptible to hypocritical perturbations from the beginning of training, which is consistent with the theoretical findings in~\citet{daniely2020most}. The \textsl{Naive} models using VGG-16 and WideResNet-28-10 can also achieve moderate accuracy on $\cF$, though their accuracy is far below 100\%. One possible explanation is the poor scaling of network weights at initialization, whereas the trained weights are better conditioned~\citep{elsayed2018reprogramming}. Indeed, we observe that the \textsl{Mislabeling}, \textsl{Poisoning}, and \textsl{Quality} models can achieve excellent accuracy ($>$ 99\%) on $\cF$. Besides, similar observations can be seen under $\ell_2$ threat model with in Appendix~\ref{app:omit-figure} Table~\ref{tab:st-xxx-cifar10-l2-full}. We report the verification performance on SVHN, CIFAR-100 and Tiny-ImageNet in Table~\ref{tab:st-resnet18-xxx}, and similar conclusions hold. Finally, we notice that the standard deviations of the \textsl{Noise} models are relatively high, which may be due to the discrepancy between the distributions of noisy inputs and real images.

\section{Hypocritical Risk}
\label{sec:hyp-risk}

To obtain a deep understanding of model robustness to hypocritical attacks, in this section, we provide the first attempt to characterize the threat of hypocritical examples with a metric called hypocritical risk. Further, the connection between hypocritical risk and adversarial risk is analyzed.

We start by giving the formal definition of adversarial risk~\citep{madry2018towards, uesato2018adversarial, cullina2018pac} under some $\ell_p$ norm:
\begin{definition} [Adversarial Risk]
   Given a classifier $f_{\vtheta}$ and a data distribution $\cD$, the adversarial risk under the threat model of $\epsilon$-bounded perturbations is defined as:
   $$
    \textstyle
    \cR_{\adv}(f_{\vtheta}, \cD) = \underset{(\vx, y) \sim \cD}{\bbE}  \left[ \underset{\| \vx' - \vx \| \leq \epsilon}{\max} \ind (f_{\vtheta}( \vx') \neq y) \right].
   $$
\end{definition}
Adversarial risk characterizes the threat of adversarial examples, representing the fraction of the examples that can be perturbed by an adversary to induce misclassifications. Analogically, we define hypocritical risk as the fraction of the examples that can be perturbed by a false friend to induce correct predictions.
\begin{definition} [Hypocritical Risk]
   Given a classifier $f_{\vtheta}$ and a data distribution $\cD$, the hypocritical risk under the threat model of $\epsilon$-bounded perturbations is defined as:
   $$
   \textstyle
   \cR_{\hyp}(f_{\vtheta}, \cD) = \underset{(\vx, y) \sim \cD}{\bbE}  \left[ \underset{\| \vx' - \vx \| \leq \epsilon}{\max} \ind (f_{\vtheta}( \vx') = y) \right].
   $$
\end{definition}
Note that our goal here is to encourage the model to robustly predict its failures. Thus, misclassified examples are of particular interest. We denote $\cD_{f_{\vtheta}}^-$ the distribution of misclassified examples with respect to the classifier $f_{\vtheta}$. Then, $\cR_{\hyp}(f_{\vtheta}, \cD_{f_{\vtheta}}^-)$ represents the hypocritical risk on misclassified examples. Analogically, $\cR_{\adv}(f_{\vtheta}, \cD_{f_{\vtheta}}^+)$ represents the adversarial risk on correctly classified examples, where $\cD_{f_{\vtheta}}^+$ denotes the distribution of correctly classified examples. Beside, \textit{natural risk} is denoted as $\cR_{\nat}(f_{\vtheta}, \cD) = \bbE_{(\vx,y)\sim \cD} [\ind (f_{\vtheta}(\vx) \neq y)]$, which is the standard metric of model performance. Based on these notations, we can disentable natural risk from adversarial risk as follows:
\begin{theorem}
\label{thrm:adv-risk-decomposition}
$\cR_{\adv}(f_{\vtheta}, \cD) = \cR_{\nat}(f_{\vtheta}, \cD) + (1 - \cR_{\nat}(f_{\vtheta}, \cD)) \cdot \cR_{\adv}(f_{\vtheta}, \cD_{f_{\vtheta}}^+)$.
\end{theorem}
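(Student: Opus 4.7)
The plan is to prove the identity by partitioning the outer expectation in the definition of $\cR_{\adv}(f_{\vtheta}, \cD)$ according to whether the sampled pair $(\vx,y)$ is correctly classified or not, and then observing that the inner maximum takes a trivial value on the misclassified part.

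First I would introduce the event $E = \{(\vx,y) : f_{\vtheta}(\vx) \neq y\}$, so that by definition $\Pr_{\cD}(E) = \cR_{\nat}(f_{\vtheta}, \cD)$ and the conditional distributions $\cD \mid E$ and $\cD \mid E^c$ are precisely $\cD_{f_{\vtheta}}^-$ and $\cD_{f_{\vtheta}}^+$. Applying the law of total expectation to the random variable $Z(\vx,y) := \max_{\|\vx'-\vx\|\leq \epsilon} \ind(f_{\vtheta}(\vx') \neq y)$ gives
\begin{equation*}
\cR_{\adv}(f_{\vtheta}, \cD) = \Pr_{\cD}(E) \cdot \bbE[Z \mid E] + \Pr_{\cD}(E^c) \cdot \bbE[Z \mid E^c].
\end{equation*}

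Next I would dispatch the two conditional expectations. On $E$, the point $\vx'=\vx$ already satisfies $f_{\vtheta}(\vx') \neq y$ and lies in the $\epsilon$-ball around $\vx$, so $Z \equiv 1$ and hence $\bbE[Z\mid E] = 1$. On $E^c$, the conditional distribution is exactly $\cD_{f_{\vtheta}}^+$, so $\bbE[Z \mid E^c] = \cR_{\adv}(f_{\vtheta}, \cD_{f_{\vtheta}}^+)$ by the definition of adversarial risk on that distribution. Substituting back and using $\Pr_{\cD}(E^c) = 1 - \cR_{\nat}(f_{\vtheta}, \cD)$ yields the claim.

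I do not expect any serious obstacle here; the whole content of the identity is the trivial observation that an already-misclassified example is ``adversarial against itself,'' and the only mild subtlety is making sure $\cD_{f_{\vtheta}}^+$ is well-defined (i.e., $\cR_{\nat} < 1$); when $\cR_{\nat} = 1$ the second term vanishes and the identity reduces to $\cR_{\adv} = 1$, which follows directly from the first step, so no separate case analysis is required in the write-up beyond a one-line remark.
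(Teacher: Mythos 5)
Your proposal is correct and follows essentially the same route as the paper's proof: the paper's step of multiplying the indicator by $\ind(f_{\vtheta}(\vx)=y)+\ind(f_{\vtheta}(\vx)\neq y)$ and factoring the correctly-classified term through $\cD_{f_{\vtheta}}^+$ is exactly your law-of-total-expectation decomposition, with the same key observation that a misclassified example is adversarial against itself. Your added remark about the degenerate case $\cR_{\nat}=1$ is a minor point the paper leaves implicit, but it does not change the argument.
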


We note that the equation in Theorem~\ref{thrm:adv-risk-decomposition} is close to Eq. (1) in~\citet{zhang2019theoretically}, while we further decompose their boundary error into the product of two terms. Importantly, our decomposition indicates that neither the hypocritical risk on $\cD$ nor the hypocritical risk on $\cD_{f_{\vtheta}}^-$ is included in the adversarial risk. This finding suggests that the adversarial training methods that minimize adversarial risk, such as PGD-AT~\citep{madry2018towards}, may not be enough to mitigate hypocritical risk.

Analogically, the following theorem disentagles natural risk from hypocritical risk:
\begin{theorem}
\label{thrm:hyp-risk-decomposition}
$\cR_{\hyp}(f_{\vtheta}, \cD) = 1 - (1 - \cR_{\hyp}(f_{\vtheta}, \cD_{f_{\vtheta}}^-)) \cdot \cR_{\nat}(f_{\vtheta}, \cD)$.
\end{theorem}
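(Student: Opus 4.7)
The plan is to prove this by conditioning on whether the random example $(\vx, y) \sim \cD$ is correctly classified by $f_{\vtheta}$, analogous to the proof sketch one would use for Theorem~\ref{thrm:adv-risk-decomposition}. The two cases behave very differently with respect to the hypocritical max, and the formula will fall out of the law of total expectation.

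First I would apply the law of total expectation, splitting $\cD$ into its restriction to $\cD_{f_{\vtheta}}^+$ (correctly classified examples) and $\cD_{f_{\vtheta}}^-$ (misclassified examples), weighted by $1 - \cR_{\nat}(f_{\vtheta}, \cD)$ and $\cR_{\nat}(f_{\vtheta}, \cD)$ respectively. This gives
\begin{equation*}
\cR_{\hyp}(f_{\vtheta}, \cD) = (1-\cR_{\nat})\,\cR_{\hyp}(f_{\vtheta}, \cD_{f_{\vtheta}}^+) + \cR_{\nat}\,\cR_{\hyp}(f_{\vtheta}, \cD_{f_{\vtheta}}^-),
\end{equation*}
where I abbreviate $\cR_{\nat} = \cR_{\nat}(f_{\vtheta}, \cD)$.

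Next I would evaluate the first summand. For any $(\vx, y)$ with $f_{\vtheta}(\vx) = y$, the choice $\vx' = \vx$ is feasible in the $\epsilon$-ball and already satisfies $f_{\vtheta}(\vx') = y$, so the inner maximum of $\ind(f_{\vtheta}(\vx') = y)$ equals $1$ pointwise. Therefore $\cR_{\hyp}(f_{\vtheta}, \cD_{f_{\vtheta}}^+) = 1$. Substituting and regrouping then gives
\begin{equation*}
\cR_{\hyp}(f_{\vtheta}, \cD) = (1 - \cR_{\nat}) + \cR_{\nat}\,\cR_{\hyp}(f_{\vtheta}, \cD_{f_{\vtheta}}^-) = 1 - \bigl(1 - \cR_{\hyp}(f_{\vtheta}, \cD_{f_{\vtheta}}^-)\bigr)\,\cR_{\nat},
\end{equation*}
which is exactly the claimed identity.

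I do not expect a real obstacle here; the only subtlety is a measure-theoretic one, namely that $\cR_{\nat}(f_{\vtheta}, \cD) = 0$ makes $\cD_{f_{\vtheta}}^-$ undefined (and symmetrically when $\cR_{\nat} = 1$). These degenerate cases can be handled by the standard convention of setting the corresponding conditional risk to $0$, under which both sides of the equation collapse to the same value and the identity still holds trivially.
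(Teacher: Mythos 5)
Your proposal is correct and follows essentially the same route as the paper: both condition on whether $(\vx,y)$ is correctly classified and use the fact that on correctly classified examples the choice $\vx'=\vx$ makes the inner maximum trivially equal to $1$ (the paper phrases this by inserting $\ind(f_{\vtheta}(\vx)=y)+\ind(f_{\vtheta}(\vx)\neq y)$ and dropping the vanishing term, which is the same observation). Your handling of the degenerate case $\cR_{\nat}=0$ is a harmless extra remark; no gap.
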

Theorem~\ref{thrm:hyp-risk-decomposition} indicates that the hypocritical risk on $\cD$ is entangled with natural risk, and the hypocritical risk on $\cD_{f_{\vtheta}}^-$ would be a more genuine metric to capture model robustness against hypocritical examples. Indeed, $\cR_{\hyp}(f_{\vtheta}, \cD_{f_{\vtheta}}^-)$ is meaningful, which essentially represents the attack success rate of hypocritical attacks (i.e., how many failures a false friend can conceal).

In addition to adversarial risk and hypocritical risk, another important objective is \textit{stability risk}, which we define as $\cR_{\sta}(f_{\vtheta}, \cD) = \bbE_{(\vx,y)\sim \cD} [ \max_{\| \vx' - \vx \| \leq \epsilon} \ind (f_{\vtheta}(\vx') \neq f_{\vtheta}(\vx))]$. The following theorem clearly shows that adversarial risk and an upper bound of hypocritical risk can be elegantly united to constitute the stability risk.
\begin{theorem}
\label{thrm:sta-risk-decomposition}
$\cR_{\sta}(f_{\vtheta}, \cD) = (1 - \cR_{\nat}(f_{\vtheta}, \cD)) \cdot \cR_{\adv}(f_{\vtheta}, \cD_{f_{\vtheta}}^+) + \cR_{\nat}(f_{\vtheta}, \cD) \cdot \cR_{\sta}(f_{\vtheta}, \cD_{f_{\vtheta}}^-)$, where we have $\cR_{\sta}(f_{\vtheta}, \cD_{f_{\vtheta}}^-) \ge \cR_{\hyp}(f_{\vtheta}, \cD_{f_{\vtheta}}^-)$.
\end{theorem}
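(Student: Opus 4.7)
The plan is to prove the equality by partitioning the data distribution according to whether the classifier gets the example right, and then to establish the final inequality by a simple pointwise comparison of indicators.

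First I would write $\cD$ as a mixture of the correctly classified subdistribution $\cD_{f_{\vtheta}}^+$ and the misclassified subdistribution $\cD_{f_{\vtheta}}^-$, with mixture weights $1-\cR_{\nat}(f_{\vtheta},\cD)$ and $\cR_{\nat}(f_{\vtheta},\cD)$ respectively (this is just the law of total expectation applied to the definition of $\cR_{\nat}$). Applying this split to the expectation defining $\cR_{\sta}(f_{\vtheta},\cD)$ yields
\begin{equation*}
\cR_{\sta}(f_{\vtheta},\cD) = (1-\cR_{\nat}(f_{\vtheta},\cD))\cdot A + \cR_{\nat}(f_{\vtheta},\cD)\cdot B,
\end{equation*}
where $A$ and $B$ are the conditional expectations of $\max_{\|\vx'-\vx\|\le\epsilon}\ind(f_{\vtheta}(\vx')\neq f_{\vtheta}(\vx))$ over $\cD_{f_{\vtheta}}^+$ and $\cD_{f_{\vtheta}}^-$ respectively. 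The term $B$ is $\cR_{\sta}(f_{\vtheta},\cD_{f_{\vtheta}}^-)$ by definition, which gives the second summand directly.

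The key step is to show $A = \cR_{\adv}(f_{\vtheta},\cD_{f_{\vtheta}}^+)$. On the event $(\vx,y)\in\cD_{f_{\vtheta}}^+$ we have $f_{\vtheta}(\vx)=y$, so for every $\vx'$ the indicator $\ind(f_{\vtheta}(\vx')\neq f_{\vtheta}(\vx))$ coincides exactly with $\ind(f_{\vtheta}(\vx')\neq y)$. Taking the max over $\|\vx'-\vx\|\le\epsilon$ and the expectation over $\cD_{f_{\vtheta}}^+$ then reproduces $\cR_{\adv}(f_{\vtheta},\cD_{f_{\vtheta}}^+)$, yielding the claimed equality.

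For the inequality $\cR_{\sta}(f_{\vtheta},\cD_{f_{\vtheta}}^-)\ge \cR_{\hyp}(f_{\vtheta},\cD_{f_{\vtheta}}^-)$, I would argue pointwise: on $\cD_{f_{\vtheta}}^-$ we have $f_{\vtheta}(\vx)\neq y$, so any $\vx'$ with $f_{\vtheta}(\vx')=y$ must also satisfy $f_{\vtheta}(\vx')\neq f_{\vtheta}(\vx)$. Hence $\ind(f_{\vtheta}(\vx')=y)\le \ind(f_{\vtheta}(\vx')\neq f_{\vtheta}(\vx))$ everywhere, and monotonicity of max and expectation yields the conclusion. I do not anticipate a real obstacle here; the only subtlety is bookkeeping the conditioning, in particular making sure the weights $(1-\cR_{\nat})$ and $\cR_{\nat}$ multiply the conditional expectations on $\cD_{f_{\vtheta}}^+$ and $\cD_{f_{\vtheta}}^-$ in the correct order, which mirrors the decomposition already used in the proof of Theorem~\ref{thrm:adv-risk-decomposition}.
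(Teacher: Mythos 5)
Your proposal is correct and follows essentially the same route as the paper: both split $\cR_{\sta}(f_{\vtheta},\cD)$ by conditioning on whether $f_{\vtheta}(\vx)=y$, identify the $\cD_{f_{\vtheta}}^+$ term with $\cR_{\adv}(f_{\vtheta},\cD_{f_{\vtheta}}^+)$ via the substitution $\ind(f_{\vtheta}(\vx')\neq f_{\vtheta}(\vx))=\ind(f_{\vtheta}(\vx')\neq y)$, and keep the $\cD_{f_{\vtheta}}^-$ term as $\cR_{\sta}(f_{\vtheta},\cD_{f_{\vtheta}}^-)$. Your pointwise argument for the inequality is a slightly more streamlined version of the paper's Lemma~\ref{lemma:inquality}, which inserts the intermediate quantity $\overline{\cR}_{\hyp}(f_{\vtheta},\cD_{f_{\vtheta}}^-)$ only because that tighter bound is reused later to motivate THRM; for the theorem itself your direct comparison of indicators suffices.
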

Theorem~\ref{thrm:sta-risk-decomposition} indicates that the adversarial training methods that aim to minimize the stability risk, such as TRADES~\citep{zhang2019theoretically}, are capable of mitigating hypocritical risk.

The proofs of the above results are provided in Appendix~\ref{app:proofs}.
Finally, we note that, similar to the trade-off between natural risk and adversarial risk~\citep{tsipras2018robustness, zhang2019theoretically}, there may also exist an inherent tension between natural risk and hypocritical risk. We illustrate this phenomenon by constructing toy examples in Appendix~\ref{app:toy-tradeoffs}.

\begin{table*}[!t]
\small
\centering
\vspace{-2px}
\caption{
Verification accuracy (\%) of adversarially trained ResNet-18 models under $\linf$ threat model across different datasets. 
Standard deviations of 5 random runs are given in Appendix~\ref{app:omit-figure} Table~\ref{tab:at-resnet18-xxx-full}.
}
\label{tab:at-resnet18-xxx}
\vspace{-4px}

\begin{tabular}{@{}cl|ccc|ccc|ccc@{}}
\toprule
\multirow{2}{*}{\begin{tabular}[c]{@{}c@{}}Threat\\ Model\end{tabular}} & \multirow{2}{*}{Model} & \multicolumn{3}{c|}{CIFAR-10} & \multicolumn{3}{c|}{CIFAR-100} & \multicolumn{3}{c}{Tiny-ImageNet} \\ \cmidrule(lr){3-5} \cmidrule(lr){6-8} \cmidrule(lr){9-11}
&           & $\cD$ & $\cA$ & $\cF$ & $\cD$ & $\cA$ & $\cF$ & $\cD$ & $\cA$ & $\cF$ \\ \midrule
\multirow{6}{*}{\begin{tabular}[c]{@{}c@{}}$\linf$\\ ($\epsilon=8/255$)\end{tabular}} & Poisoning (NT)     & 11.13 & 0.00 & 100.00 & 34.80 & 0.00 & 100.00 & 34.87 & 0.00 & 100.00 \\
& Poisoning (PGD-AT)     & 82.65 & 51.34 & 96.20 & 57.32 & 27.85 & 83.20 & 45.14 & 21.69 & 68.96 \\
& Poisoning (TRADES) & 80.01 & 52.34 & 94.64 & 56.29 & 29.41 & 81.79 & 46.38 & 21.41 & 73.50 \\
& Quality (NT)       & 94.38 & 0.00 & 100.00 & 76.64 & 0.00 & 100.00 & 64.03 & 0.00 & 100.00 \\
& Quality (PGD-AT)       & 84.08 & 51.98 & 96.92 & 59.19 & 28.21 & 84.87 & 47.23 & 22.03 & 71.95 \\
& Quality (TRADES)   & 81.05 & 53.32 & 95.17 & 57.27 & 30.00 & 82.88 & 47.86 & 22.03 & 75.04 \\ \bottomrule
\end{tabular}
\vspace{-5px}
\end{table*}

\begin{figure*}[!t]
   \centering
   \subfigure[CIFAR-10]{
      \centering
      \includegraphics[width=0.48\textwidth]{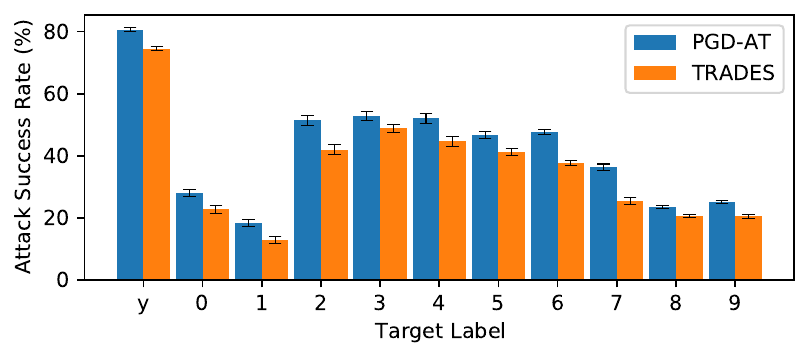}
   }
   \hfill
   \subfigure[SVHN]{
      \centering
      \includegraphics[width=0.48\textwidth]{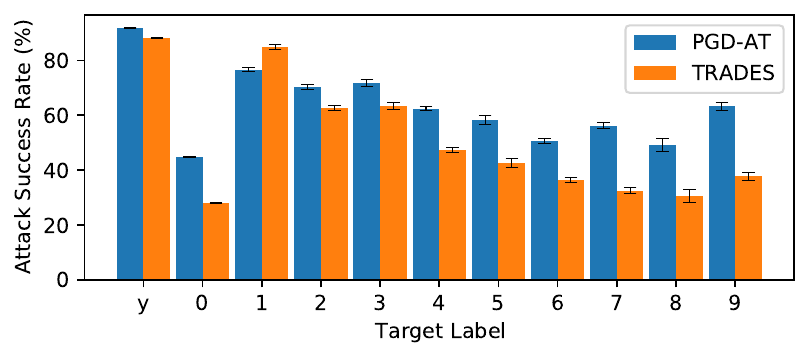}
   }
   \subfigure[CIFAR-100]{
      \centering
      \includegraphics[width=0.48\textwidth]{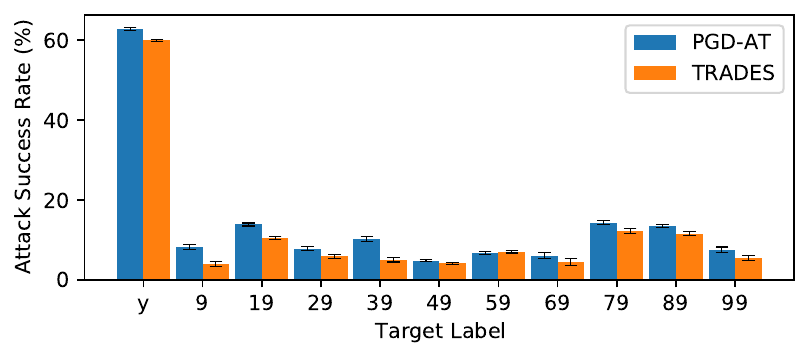}
   }
   \hfill
   \subfigure[Tiny-ImageNet]{
      \centering
      \includegraphics[width=0.48\textwidth]{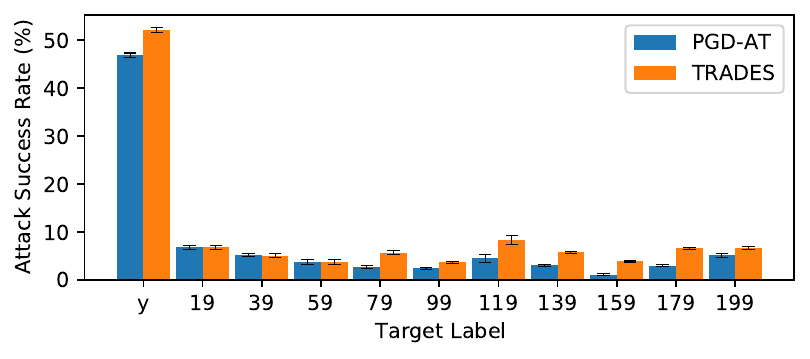}
   }
\vspace{-3px}
\caption{
Attack success rate (\%) of adversarially trained ResNet-18 models on misclassified examples under $\linf$ threat model. The target label ``y'' denotes that the misclassified examples are perturbed to be correctly classified. The target labels ``0'' $\sim$ ``199'' denote that the misclassified examples are perturbed to be classified as a specific target, no matter whether the target label is correct or not. Error bars indicate standard deviation over 5 random runs.
}
\label{fig:asr-quality}
\vspace{-8px}
\end{figure*}

\section{Countermeasures}
\label{sec:countermeasures}

In this section, we consider several countermeasures to circumvent the threat of hypocritical attacks. The countermeasures include PGD-AT~\citep{madry2018towards}, TRADES~\citep{zhang2019theoretically}, a novel adaptive robust training method named THRM, and an inherently robust network architecture named $\linf$-dist nets~\citep{zhang2021towards}. Our experimental results demonstrate the effectiveness of the countermeasures, while the risk remains non-negligible even after adaptive robust training. 
Therefore, our investigation suggests that practitioners have to be aware of this type of threat and be careful about dataset security.

\subsection{Method Description}

PGD-AT is a popular adversarial training method that minimizes cross-entropy loss on adversarial examples:
\begin{equation}
\label{eq:pgdat}
\textstyle
    \cL_{\pgdat} = \underset{(\vx, y) \sim \cD}{\bbE} \left[ \underset{\| \vx' - \vx \| \leq \epsilon}{\max} \ce(\vp_{\vtheta}(\vx'), y) \right].
\end{equation}
Though the objective of PGD-AT is originally designed to defend against adversarial examples, we are interested in its robustness against hypocritical perturbations in this paper.

TRADES is another adversarial training variant, whose training objective is:
\begin{equation}
\label{eq:trades}
\textstyle
    \cL_{\trades} = \underset{(\vx, y) \sim \cD}{\bbE}  \left[ \ce(\vp_{\vtheta}(\vx), y) + \lambda \cdot \kl(\vp_{\vtheta}(\vx) \parallel \vp_{\vtheta}(\vx_{\sta}) \right],
\end{equation}
where $\vx_{\sta}=\arg\max_{\| \vx' - \vx \| \leq \epsilon} \kl(\vp_{\vtheta}(\vx) \parallel \vp_{\vtheta}(\vx'))$, $\kl(\cdot\parallel\cdot)$ denotes the Kullback–Leibler divergence, and $\lambda$ is the hyperparameter to control the trade-off. We note that TRADES essentially aims to minimize a trade-off between natural risk and stability risk. Thus, it is reasonable to expect that TRADES performs better than PGD-AT for resisting hypocritical perturbations, as supported by Theorem~\ref{thrm:sta-risk-decomposition}.

We further consider an adaptive robust training objective:
\begin{equation}
\label{eq:thrm}
\textstyle
    \cL_{\thrm} = \underset{(\vx, y) \sim \cD}{\bbE}  \left[ \ce(\vp_{\vtheta}(\vx), y) + \lambda \cdot \kl(\vp_{\vtheta}(\vx) || \vp_{\vtheta}(\vx_{\hyp})) \right],
\end{equation}
where $\vx_{\hyp}=\arg\min_{\| \vx' - \vx \| \leq \epsilon} \ce(\vp_{\vtheta}(\vx'), y)$ as in Section~\ref{sec:hyp-examples}, and $\lambda$ is the hyperparameter to control the trade-off. We note that Eq.~(\ref{eq:thrm}) essentially aims to minimize a trade-off between natural risk and hypocritical risk (more details are provided in Appendix~\ref{app:thrm-details}). Thus, we term this method THRM, i.e., Trade-off for Hypocritical Risk Minimization.

Additionally, we adapt an inherently robust network architecture called $\linf$-dist nets~\citep{zhang2021towards} to resist hypocritical perturbations, whose technical details are deferred to Appendix~\ref{app:linfnet-details} due to the space limitation.

\begin{figure*}[t]
\vspace{-8px}
   \centering
   \subfigure[CIFAR-10]{
      \centering
      \includegraphics[width=0.30\textwidth]{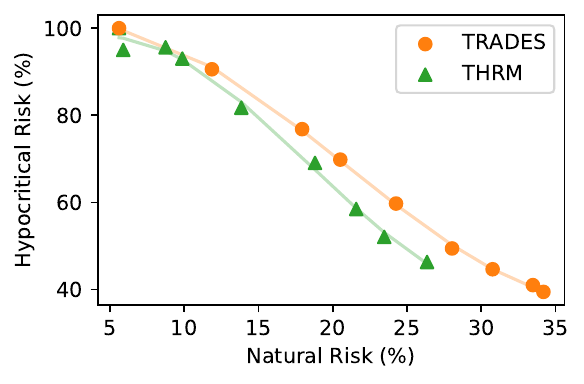}
   }
   \hfill
   \subfigure[CIFAR-100]{
      \centering
      \includegraphics[width=0.30\textwidth]{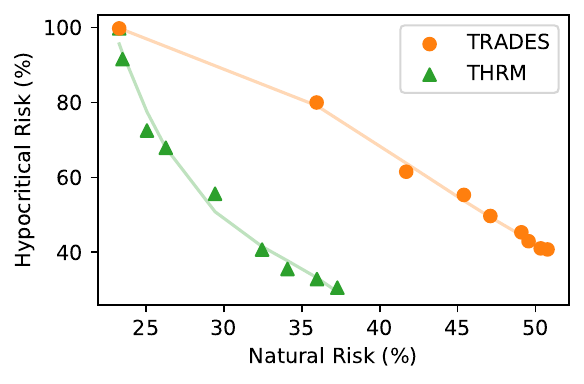}
   }
   \hfill
   \subfigure[Tiny-ImageNet]{
      \centering
      \includegraphics[width=0.30\textwidth]{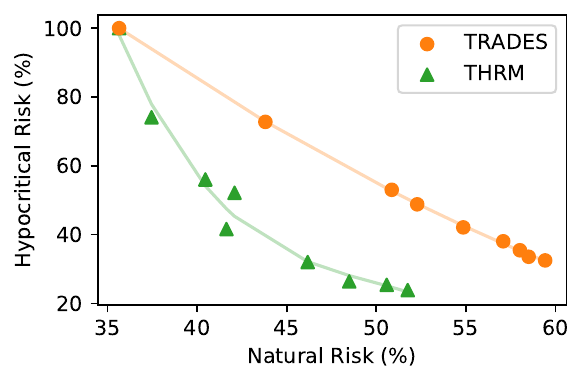}
   }
\vspace{-5px}
\caption{
Empirical comparison between TRADES and THRM in terms of natural risk and the hypocritical risk on misclassified examples under $\linf$ threat model.
Each point represents a model trained on the \textsl{Quality} data with a different $\lambda$.
}
\label{fig:tradeoff-3ds}
\vspace{-8px}
\end{figure*}

\subsection{Method Performance}
\label{sec:countermeasures-exps}

In this subsection, we evaluate the effectiveness of the countermeasures described above. From now on, we consider the \textsl{Poisoning} and \textsl{Quality} training sets for three reasons:
\textit{i)} the \textsl{Poisoning} data can be utilized to train accurate model via adversarial training~\citep{tao2021provable}.
\textit{ii)} adversarial training methods are hard to fit the \textsl{Noise} and \textsl{Mislabeling} training data~\citep{dong2021exploring};
\textit{iii)} the \textsl{Noise} and \textsl{Mislabeling} training data can be avoided by standard data cleaning~\citep{kandel2011wrangler}, while the \textsl{Poisoning} and \textsl{Quality} data cannot, since they are correctly labelled.

\subsubsection{Performance of PGD-AT and TRADES.}
Table~\ref{tab:at-resnet18-xxx} reports the results of PGD-AT and TRADES on CIFAR-10, CIFAR-100 and Tiny-ImageNet. We observe that the robustness of the models against hypocritical perturbations is better than the naturally trained (NT) models in Section~\ref{sec:hyp-examples-exps}, so is their robustness against adversarial perturbations. Nevertheless, there are still a large amount of misclassified examples that can be perturbed to be correctly classified. For example, the \textsl{Quality} (PGD-AT) model on CIFAR-10 exhibit 96.92\% accuracy on hypocritically perturbed examples, while its clean accuracy is only 84.08\%. Results on SVHN are deferred to Appendix~\ref{app:omit-figure} Table~\ref{tab:at-resnet18-svhn-full}, and similar conclusions hold.

\subsubsection{A Closer Look at Robustness.}
To directly compare the model robustness, we report the attack success rate of hypocritical attacks (which is equivalent to the hypocritical risk on misclassified examples) for the \textsl{Quality} models in Figure~\ref{fig:asr-quality}. As a reference, we also report the success rate of targeted adversarial examples on the misclassified examples. An interesting observation is that the attack success rate of hypocritical examples is much greater than that of the targeted adversarial examples, especially on CIFAR-100 and Tiny-ImageNet, indicating that hypocritical risk may be higher than we thought. More importantly, we observe that the attack success rate of TRADES is lower than that of PGD-AT on CIFAR-10, SVHN and CIFAR-100. This indicates that TRADES is not only better than PGD-AT for adversarial robustness (which is observed in~\citet{pang2020bag}) but also better than PGD-AT for hypocritical robustness. One exception is that TRADES performs worse on Tiny-ImageNet. This is simply because that we set the trade-off parameter of TRADES to 6 as in~\citep{zhang2019theoretically, pang2020bag}, which is too small for Tiny-ImageNet. In the next paragraph, this parameter will be tuned.

\subsubsection{Comparison with THRM.}
We empirically compare TRADES and THRM in terms of the natural risk and the hypocritical risk on misclassified examples by tuning the regularization parameter $\lambda$ in the range $[0, 100]$. The reported natural risk is estimated on clean verification data. The reported hypocritical risk is estimated on misclassified examples and is empirically approximated using PGD. Results for the models trained on the \textsl{Quality} data are summarized in Figure~\ref{fig:tradeoff-3ds}. Numerical details about the model accuracy on $\cD$, $\cA$, and $\cF$ with different $\lambda$ are given in Appendix~\ref{app:omit-figure} Tables~\ref{tab:tradeoff-cifar10},~\ref{tab:tradeoff-cifar100},~\ref{tab:tradeoff-tinyimagenet}, and~\ref{tab:tradeoff-cifar10-poisoning}.
We observe that for both TRADES and THRM, as $\lambda$ increases, the natural risk increases and the hypocritical risk decreases. It turns out that THRM achieves a better trade-off than TRADES in all cases, which is consistent with our analysis of THRM in Appendix~\ref{app:thrm-details}, and the gap between THRM and TRADES tends to increase when the number of classes is large. Therefore, when we only consider the threat of hypocritical attacks, THRM would be preferable than TRADES. However, if one wants to resist the threat of both adversarial examples and hypocritical examples, TRADES is a viable alternative.

\subsubsection{Results of $\linf$-dist nets.}
Results show that $\linf$-dist nets achieve moderate certified hypocritical risk. For both \textsl{Quality} model and \textsl{Poisoning} model, nearly half of the errors are guaranteed not to be covered up by any attack. However, $\linf$-dist nets still perform worse than TRADES and THRM in terms of empirical hypocritical risk.

Overall, some improvements have been obtained, while complete robustness against hypocritical attacks still cannot be fully achieved with the current methods. Hypocritical risk remains non-negligible even after adaptive robust training. This dilemma highlights the difficulty of stabilizing models to prevent hypocritical attacks. We feel that new manners may be needed to better tackle this problem.

\section{Conclusions and Future Directions}

This paper unveils the threat of hypocritical examples in the model verification stage. Our experimental results indicate that this type of security risk is pervasive, and remains non-negligible even if adaptive countermeasures are adopted. Therefore, our investigation suggests that practitioners should be aware of this type of threat and be careful about dataset security. Below we discuss some limitations with our current study, and we also feel that our results can lead to several thought-provoking future works.

\textit{Other performance requirements. }
One may consider using adversarial perturbations to combat hypocritical attacks, i.e., estimating the robust error~\citep{zhang2019theoretically} on the verification data. We note that this is actually equivalent to choosing the robust error as the performance requirement. It is natural then to ask whether a hypocritical attacker can cause a substandard model to exhibit high robust accuracy with small perturbations. We leave this as future work.

\textit{Transferability. }
It is also very important to study the transferability of hypocritical examples across substandard models models. Transfer-based hypocritical attacks are still harmful when model structure and weights are unknown to the attacker. Understanding the transferability would help us to design effective defense strategies against the transfer-based hypocritical attacks.

\textit{Good use of hypocritical perturbations. }
We showed that many types of substandard models are susceptible to hypocritical attacks. Then, an intriguing question is whether we can turn this weakness into a strength. Specifically, one may find such a ``true friend'' who is capable of \textit{consistently} helping a substandard model during the deployment stage to make correct predictions. There are concurrent works~\citep{salman2021unadversarial, pestana2021assistive} which explored this direction, where ``robust objects'' are designed to help a model to confidently detect or classify them.

\clearpage

\section*{Acknowledgments}


This work was supported by the National Natural Science Foundation of China (Grant No. 61732006, 62076124, and 62106028).
Lei Feng was also supported by CAAI-Huawei MindSpore Open Fund.

\small
\bibliography{aaai22}

\clearpage
\onecolumn 
\normalsize
\appendix

\section{Experimental Setup}
\label{app:exp-setup}

To reveal the worst-case risk, our experiments focus on white-box attacks. Therefore, the hypocritical examples in Section~\ref{sec:hyp-examples} and Section~\ref{sec:countermeasures} are carefully crafted for each model after training. We also note that it is possible for future research to compute hypocritical examples before training via transfer-based attacks. Our experiments run with NVIDIA GeForce RTX 2080 Ti GPUs. Our implementation is based on PyTorch, and the code to reproduce our results available at \url{https://github.com/TLMichael/False-Friends}.

\subsection{Datasets}

\subsubsection{CIFAR-10\footnote{\url{https://www.cs.toronto.edu/~kriz/cifar.html}}.}
This dataset~\citep{krizhevsky2009learning} consists of 60,000 $32\times 32$ colour images (50,000 images for training and 10,000 images for evaluation) in 10 classes (``airplane'', ``car'', ``bird'', ``cat'', ``deer'', ``dog'', ``frog'', ``horse'', ``ship'', and ``truck''). We use the original test set as the clean verification data. We adopt various architectures for this dataset, including MLP, VGG-16~\citep{simonyan2014very}, ResNet-18~\citep{he2016deep}, and WideResNet-28-10~\citep{zagoruyko2016wide}. A four-layer MLP (2 hidden layers, 3072 neurons in each) with ReLU activations~\citep{glorot2011deep} is used here. The models are initialized with the commonly used He initialization~\citep{he2015delving}, which is the default initialization in torchvision\footnote{\url{https://github.com/pytorch/vision/tree/master/torchvision/models}}. The initial learning rate is set to 0.1, except for MLP and VGG-16 on Noise and Mislabeling, where the initial learning rate is set to 0.01. Following~\citet{pang2020bag}, in the default setting, the models are trained for 110 epochs using SGD with momentum 0.9, batch size 128, and weight decay $5\times 10^{-4}$, where the learning rate is decayed by a factor of 0.1 in the 100th and 105th epochs, and simple data augmentations such as $32 \times 32$ random crop with 4-pixel padding and random horizontal flip are applied.  Following~\citet{zhang2016understanding}, when training on the \textsl{Noise} and \textsl{Mislabeling} data, data augmentations and weight decay are turned off\footnote{\url{https://github.com/pluskid/fitting-random-labels}}.

\subsubsection{SVHN\footnote{\url{http://ufldl.stanford.edu/housenumbers/}}.}
This dataset~\citep{netzer2011reading} consists of 630,420 $32\times 32$ colour images (73,257 images for training and 26,032 images for evaluation) in 10 classes. We use the original test set as the clean verification data. We adopt the ResNet-18 architecture for this dataset. The model is initialized with the commonly used He initialization. The initial learning rate is set to 0.01. In the default setting, the models are trained for 60 epochs using SGD with momentum 0.9, batch size 128, and weight decay $5\times 10^{-4}$, where the learning rate is decayed by a factor of 0.1 in the 50th and 55th epochs, and simple data augmentation such as $32 \times 32$ random crop with 4-pixel padding is applied. when training on the \textsl{Noise} and \textsl{Mislabeling} data, data augmentation and weight decay are turned off.

\subsubsection{CIFAR-100\footnote{\url{https://www.cs.toronto.edu/~kriz/cifar.html}}.}
This dataset~\citep{krizhevsky2009learning} consists of 60,000 $32\times 32$ colour images (50,000 images for training and 10,000 images for evaluation) in 100 classes. We use the original test set as the clean verification data. We adopt the ResNet-18 architecture for this dataset. The model is initialized with the commonly used He initialization. The initial learning rate is set to 0.1. In the default setting, the models are trained for 110 epochs using SGD with momentum 0.9, batch size 128, and weight decay $5\times 10^{-4}$, where the learning rate is decayed by a factor of 0.1 in the 100th and 105th epochs, and simple data augmentations such as $32 \times 32$ random crop with 4-pixel padding and random horizontal flip are applied. when training on the \textsl{Noise} and \textsl{Mislabeling} data, data augmentations and weight decay are turned off.

\subsubsection{Tiny-ImageNet\footnote{\url{http://cs231n.stanford.edu/tiny-imagenet-200.zip}}.}
This dataset~\citep{yao2015tiny} consists of 110000 $64\times 64$ colour images (100,000 images for training and 10,000 images for evaluation) in 200 classes. We use the original validation set as the clean verification data. We adopt the ResNet-18 architecture for this dataset. The model is initialized with the commonly used He initialization. The initial learning rate is set to 0.1. In the default setting, the models are trained for 60 epochs using SGD with momentum 0.9, batch size 64, and weight decay $5\times 10^{-4}$, where the learning rate is decayed by a factor of 0.1 in the 50th and 55th epochs, and simple data augmentations such as $64 \times 64$ random crop with 4-pixel padding and random horizontal flip are applied. when training on the \textsl{Noise} and \textsl{Mislabeling} data, data augmentation and weight decay are turned off.

\subsection{Robust Training}

We perform robust training algorithms including PGD-AT, TRADES, and THRM by following the common settings~\citep{madry2018towards, pang2020bag}. Specifically, we train against a projected gradient descent (PGD) attacker, starting from a random initial perturbation of the training data. 
We use 10/20 steps of PGD with a step size of $\epsilon/4$ for training/evaluation.
We consider adversarial perturbations in $\ell_p$ norm where $p=\{2, \infty\}$. For $\ell_2$ threat model, $\epsilon = 0.5$. For $\linf$ threat model, $\epsilon=8/255$.

\subsection{Low-Quality Training Data}

The \textsl{Noisy} data is constructed by replacing the each image in the training set with uniform noise and keeping the labels unchanged~\citep{zhang2016understanding}. The \textsl{Mislabeling} data is constructed by replacing the labels with random ones and keeping the labels unchanged~\citep{zhang2016understanding}. The \textsl{Poisoning} data is constructed by perturbing the images to maximize generalization error and keeping the labels unchanged~\citep{ilyas2019adversarial, nakkiran2019a}. Following~\citet{tao2021provable}, each input-label pair $(\vx, y)$ in the \textsl{Poisoning} data is constructed as follows. We first select a target class $t$ deterministically according to the source class $y$ (e.g., using a fixed permutation of labels). Then, we add a small adversarial perturbation to $\vx$ to ensure that it is misclassified as $t$ by as naturally trained model, that is, $\vx_{\adv} = \arg\min_{\| \vx' - \vx \| \leq \epsilon} \operatorname{CE}(f(\vx), t)$. Here $f$ is a naturally trained model on the \textsl{Quality} data. Finally, we assign the correct label $y$ to the perturbed input. The resulting input-label pairs $(\vx_{\adv}, y)$ make up the \textsl{Poisoning} dataset. Perturbations are constrained in $\ell_2$-norm with $\epsilon=0.5$ or $\linf$-norm with $\epsilon=8/255$. The number of PGD iterations is set to 100 and step size is set to $\epsilon/5$.

\section{Omitted Tables and Figures}
\label{app:omit-figure}

\newcommand{\hexp}[1]{\includegraphics[width=0.18\textwidth]{#1}}
\newcolumntype{C}{>{\centering\arraybackslash}m{0.18\textwidth}}

\begin{table}[!h]
    \centering
    \caption{
    Verification examples for the naturally trained models. These examples are originally misclassified (as red labels) by the models, but they are correctly classified (as green labels) after adding hypocritical perturbations. Perturbations are rescaled for display.
    }
    \label{tab:my_label}
    \begin{tabular}{cCCCC}
        \toprule
        Model & Example \#422 & Example \#4503 & Example \#8778 & Example \#9910 \\
        \midrule
        Naive       & \hexp{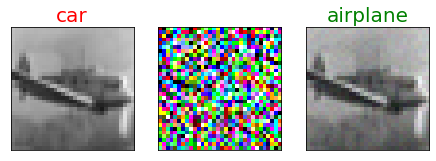}& \hexp{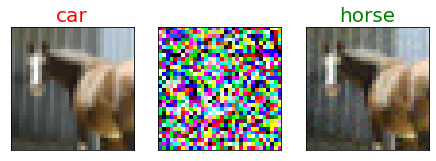}& \hexp{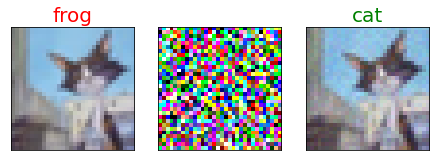}& \hexp{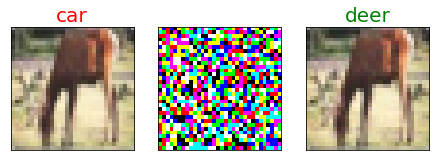}  \\
        Noise       & \hexp{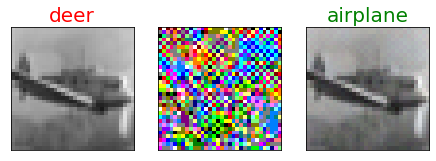}& \hexp{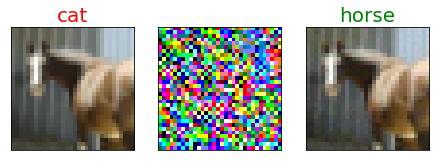}& \hexp{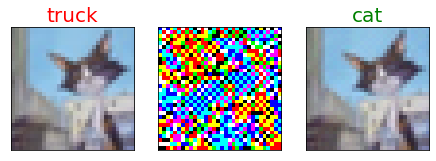}& \hexp{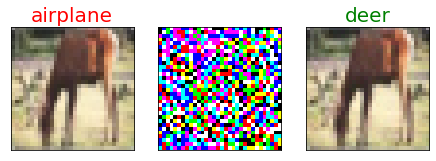}  \\
        Mislabeling & \hexp{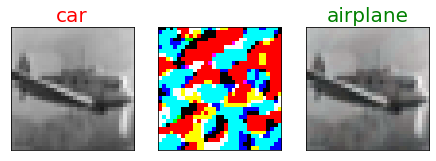}& \hexp{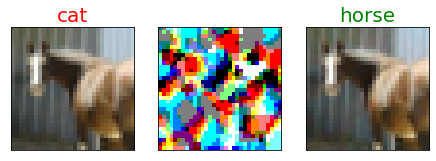}& \hexp{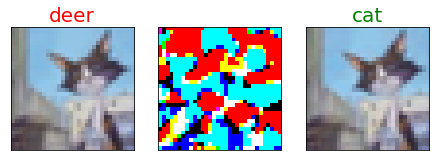}& \hexp{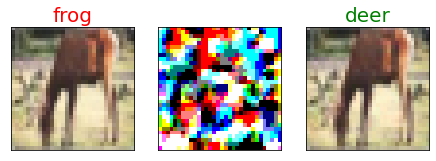}  \\
        Poisoning   & \hexp{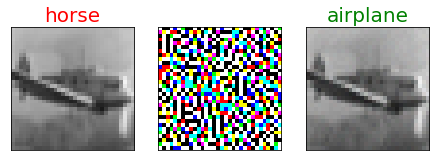}& \hexp{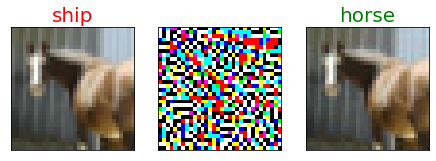}& \hexp{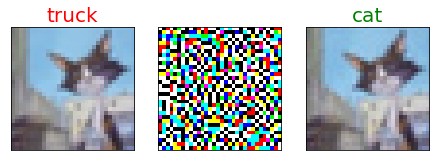}& \hexp{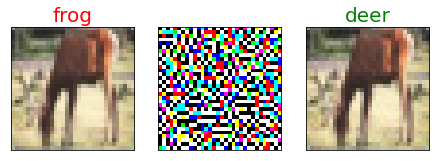}  \\
        Quality     & \hexp{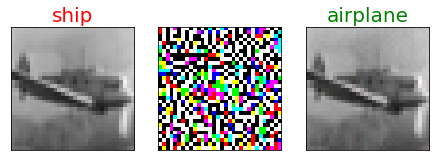}& \hexp{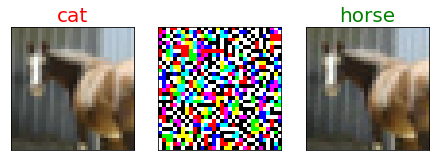}& \hexp{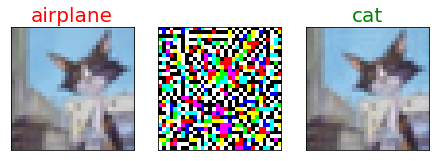}& \hexp{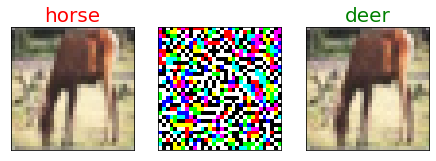}  \\
        \bottomrule
    \end{tabular}
\end{table}

\begin{table}[!h]
\tiny
\centering
\caption{
Verification accuracy (\%) of substandard models on CIFAR-10 under $\linf$ threat model across different architectures.
$\cD$, $\cA$, and $\cF$ denote the model accuracy evaluated on clean examples, adversarially perturbed examples, and hypocritically perturbed examples, respectively.
We report mean and standard deviation over 5 random runs.
}
\label{tab:st-xxx-cifar10-full}
\begin{tabular}{@{}l|ccc|ccc|ccc@{}}
\toprule
\multirow{2}{*}{Model} & \multicolumn{3}{c|}{MLP} & \multicolumn{3}{c|}{VGG-16} & \multicolumn{3}{c}{WideResNet-28-10} \\ \cmidrule(lr){2-4} \cmidrule(lr){5-7} \cmidrule(lr){8-10}
            & $\cD$ & $\cA$ & $\cF$ & $\cD$ & $\cA$ & $\cF$ & $\cD$ & $\cA$ & $\cF$ \\ \midrule
Naive                  & 8.56±1.20 & 0.00±0.00 & 99.56±0.50 & 9.76±0.34 & 0.45±0.99 & 57.25±26.53 & 10.06±0.14 & 0.34±0.65 & 40.64±14.66 \\
Noise                  & 8.53±0.77 & 0.71±0.46 & 86.75±0.34 & 9.81±1.17 & 0.00±0.01 & 97.98±3.60  & 11.35±1.22 & 0.02±0.05 & 98.42±1.66  \\
Mislabeling & 9.92±0.18  & 0.00±0.00 & 100.00±0.00 & 9.94±0.20  & 0.00±0.00 & 99.86±0.10  & 10.21±0.17 & 0.00±0.00 & 100.00±0.00 \\
Poisoning   & 57.60±0.17 & 0.55±0.05 & 99.50±0.05  & 12.19±0.88 & 0.00±0.00 & 99.49±0.17  & 10.42±1.16 & 0.00±0.00 & 100.00±0.00 \\
Quality     & 58.09±0.21 & 0.91±0.07 & 99.31±0.07  & 92.90±0.08  & 0.00±0.00 & 100.00±0.00 & 95.41±0.13 & 0.00±0.00 & 100.00±0.00 \\ \bottomrule
\end{tabular}
\end{table}

\begin{table}[!h]
\tiny
\centering
\caption{
Verification accuracy (\%) of substandard models on CIFAR-10 under $\ell_2$ threat model across different architectures.
$\cD$, $\cA$, and $\cF$ denote the model accuracy evaluated on clean examples, adversarially perturbed examples, and hypocritically perturbed examples, respectively.
We report mean and standard deviation over 5 random runs.
}
\label{tab:st-xxx-cifar10-l2-full}
\begin{tabular}{@{}l|ccc|ccc|ccc@{}}
\toprule
\multirow{2}{*}{Model} & \multicolumn{3}{c|}{MLP} & \multicolumn{3}{c|}{VGG-16} & \multicolumn{3}{c}{WideResNet-28-10} \\ \cmidrule(lr){2-4} \cmidrule(lr){5-7} \cmidrule(lr){8-10}
            & $\cD$ & $\cA$ & $\cF$  & $\cD$ & $\cA$ & $\cF$  & $\cD$ & $\cA$ & $\cF$  \\ \midrule
Naive                  & 8.56±1.20  & 0.07±0.05  & 72.75±2.79 & 9.76±0.34  & 3.28±4.22 & 33.24±23.10 & 10.06±0.14 & 4.85±3.95 & 19.52±8.43  \\
Noise                  & 8.53±0.77  & 0.79±0.45  & 74.32±2.44 & 11.13±1.32 & 0.00±0.00 & 76.63±12.49 & 9.07±1.56  & 0.34±0.36 & 81.83±10.68 \\
Mislabeling            & 9.92±0.18  & 0.00±0.00  & 97.75±0.08 & 10.00±0.46 & 0.01±0.01 & 89.71±1.20  & 10.07±0.14 & 0.00±0.00 & 99.99±0.01  \\
Poisoning & 57.97±0.28 & 19.70±0.43 & 88.52±0.25 & 23.61±1.45 & 0.00±0.00 & 99.85±0.13 & 18.57±1.35 & 0.00±0.00 & 100.00±0.00 \\
Quality                & 58.09±0.21 & 19.76±0.51 & 88.74±0.20 & 92.99±0.14 & 0.21±0.06 & 100.00±0.00 & 95.44±0.09 & 0.03±0.01 & 100.00±0.00 \\ \bottomrule
\end{tabular}
\end{table}

\begin{table}[!h]
\tiny
\centering
\caption{
Verification accuracy (\%) of substandard ResNet-18 models under $\linf$ threat model across different datasets. 
$\cD$, $\cA$, and $\cF$ denote the model accuracy evaluated on clean examples, adversarially perturbed examples, and hypocritically perturbed examples, respectively.
We report mean and standard deviation over 5 random runs.
}
\label{tab:st-resnet18-xxx-full}
\begin{tabular}{@{}l|ccc|ccc|ccc@{}}
\toprule
\multirow{2}{*}{Model} & \multicolumn{3}{c|}{SVHN} & \multicolumn{3}{c|}{CIFAR-100} & \multicolumn{3}{c}{Tiny-ImageNet} \\ \cmidrule(lr){2-4} \cmidrule(lr){5-7} \cmidrule(lr){8-10}
            & $\cD$ & $\cA$ & $\cF$  & $\cD$ & $\cA$ & $\cF$  & $\cD$ & $\cA$ & $\cF$  \\ \midrule
Naive                  & 10.73±4.30 & 0.85±1.89 & 55.64±23.16 & 0.98±0.09 & 0.02±0.05 & 7.26±3.12 & 0.49±0.05 & 0.04±0.08 & 4.50±3.17 \\
Noise       & 10.76±2.04 & 0.00±0.00 & 99.96±0.09  & 1.02±0.14  & 0.01±0.01 & 81.93±23.14 & 0.39±0.05  & 0.01±0.01 & 74.58±27.88 \\
Mislabeling & 9.77±0.16  & 0.00±0.00 & 100.00±0.00 & 0.99±0.08  & 0.00±0.00 & 99.81±0.10  & 0.48±0.08  & 0.00±0.00 & 99.99±0.01  \\
Poisoning   & 41.42±0.73 & 0.00±0.00 & 100.00±0.00 & 34.80±0.56 & 0.00±0.00 & 100.00±0.00 & 34.87±0.27 & 0.00±0.00 & 100.00±0.00 \\
Quality     & 96.57±0.14 & 0.32±0.05 & 99.99±0.00  & 76.64±0.18 & 0.01±0.00 & 99.96±0.03  & 64.03±0.14 & 0.02±0.02 & 100.00±0.00 \\ \bottomrule
\end{tabular}
\end{table}

\begin{table}[!h]
\tiny
\centering
\caption{
Verification accuracy (\%) of adversarially trained ResNet-18 models under $\linf$ threat model across different datasets. 
$\cD$, $\cA$, and $\cF$ denote the model accuracy evaluated on clean examples, adversarially perturbed examples, and hypocritically perturbed examples, respectively.
We report mean and standard deviation over 5 random runs.
}
\label{tab:at-resnet18-xxx-full}
\begin{tabular}{@{}l|ccc|ccc|ccc@{}}
\toprule
\multirow{2}{*}{Model} & \multicolumn{3}{c|}{CIFAR-10} & \multicolumn{3}{c|}{CIFAR-100} & \multicolumn{3}{c}{Tiny-ImageNet} \\ \cmidrule(lr){2-4} \cmidrule(lr){5-7} \cmidrule(lr){8-10}
          & $\cD$ & $\cA$ & $\cF$ & $\cD$ & $\cA$ & $\cF$ & $\cD$ & $\cA$ & $\cF$ \\ \midrule
Poisoning (PGD-AT)         & 82.65±0.14 & 51.34±0.23 & 96.2±0.07  & 57.32±0.16 & 27.85±0.13 & 83.2±0.14  & 45.14±0.25 & 21.69±0.11 & 68.96±0.29 \\
Poisoning (TRADES) & 80.01±0.17 & 52.34±0.18 & 94.64±0.08 & 56.29±0.33 & 29.41±0.25 & 81.79±0.17 & 46.38±0.20 & 21.41±0.10 & 73.5±0.21 \\
Quality (PGD-AT)           & 84.08±0.18 & 51.98±0.24 & 96.92±0.09 & 59.19±0.27 & 28.21±0.16 & 84.87±0.10 & 47.23±0.22 & 22.03±0.17 & 71.95±0.22 \\
Quality (TRADES)           & 81.05±0.14 & 53.32±0.21 & 95.17±0.18 & 57.27±0.04 & 30.00±0.30 & 82.88±0.32 & 47.86±0.09 & 22.03±0.16 & 75.04±0.18 \\ \bottomrule
\end{tabular}
\end{table}

\begin{table}[!h]
\tiny
\centering
\caption{
Verification accuracy (\%) of adversarially trained ResNet-18 models on SVHN under $\linf$ threat model. 
$\cD$, $\cA$, and $\cF$ denote the model accuracy evaluated on clean examples, adversarially perturbed examples, and hypocritically perturbed examples, respectively.
We report mean and standard deviation over 5 random runs.
}
\label{tab:at-resnet18-svhn-full}
\begin{tabular}{@{}l|ccc@{}}
\toprule
\multirow{2}{*}{Model} & \multicolumn{3}{c}{SVHN}             \\ \cmidrule(lr){2-4} 
                       & $\cD$      & $\cA$      & $\cF$      \\ \midrule
Poisoning (PGD-AT)     & 90.50±0.14 & 54.17±0.16 & 99.28±0.02 \\
Poisoning (TRADES)     & 87.34±0.40 & 54.92±0.15 & 98.38±0.10 \\
Quality (PGD-AT)       & 93.37±1.49 & 56.29±2.21 & 99.45±0.13 \\
Quality (TRADES)       & 90.69±0.30 & 59.78±0.31 & 98.90±0.07 \\ \bottomrule
\end{tabular}
\end{table}

\begin{figure}[!h]
   \centering
   \subfigure[CIFAR-10]{
      \centering
      \includegraphics[width=0.48\textwidth]{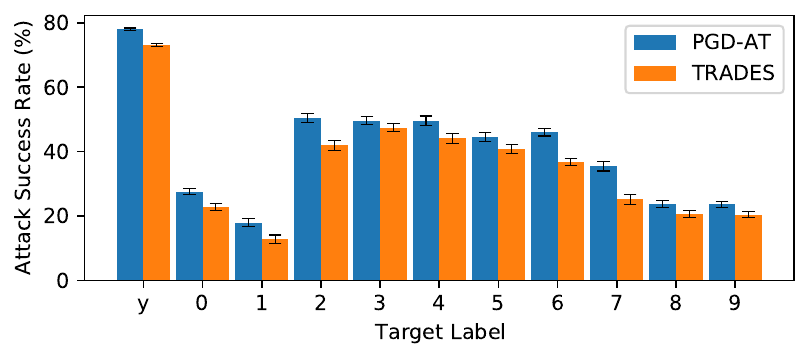}
   }
   \hfill
   \subfigure[SVHN]{
      \centering
      \includegraphics[width=0.48\textwidth]{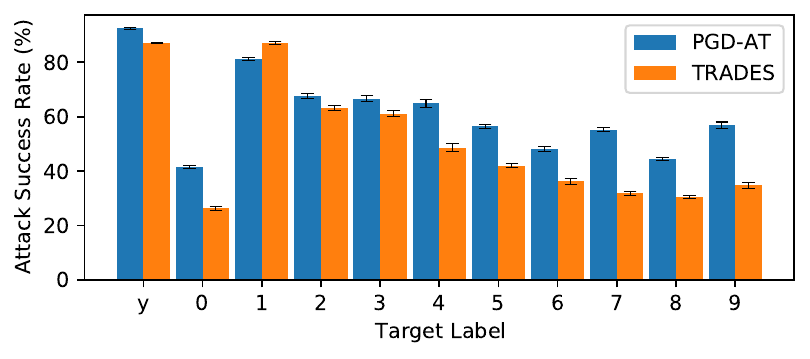}
   }
   \subfigure[CIFAR-100]{
      \centering
      \includegraphics[width=0.48\textwidth]{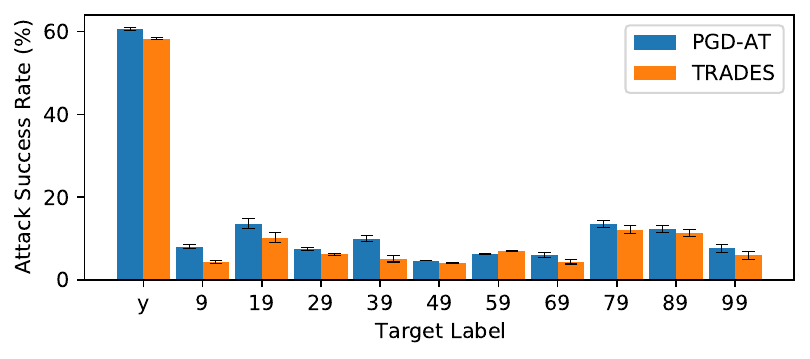}
   }
   \hfill
   \subfigure[Tiny-ImageNet]{
      \centering
      \includegraphics[width=0.48\textwidth]{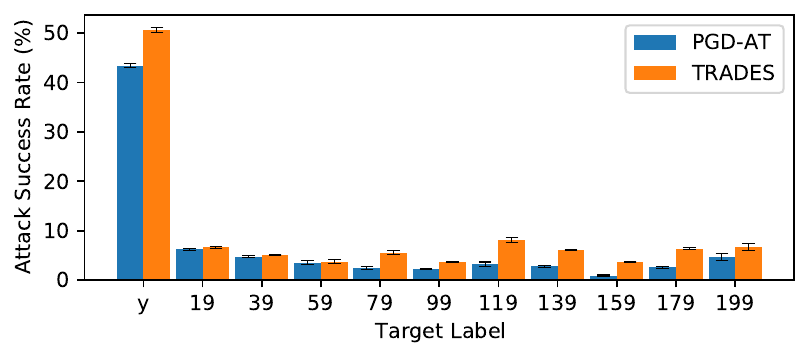}
   }
\caption{
Attack success rate (\%) of adversarially trained ResNet-18 models on misclassified examples under $\linf$ threat model. Here, the models are trained on the \textsl{Poisoning} data, and conclusions similar to Figure~\ref{fig:asr-quality} in the main text similarlly hold. The target label ``y'' denotes that the misclassified examples are perturbed to be correctly classified. The target labels ``0'' $\sim$ ``199'' denote that the misclassified examples are perturbed to be classified as a specific target, no matter whether the target label is correct or not. Error bars indicate standard deviation over 5 random runs. 
}
\label{fig:asr-poisoning}
\end{figure}

\begin{figure}[!h]
   \centering
    \includegraphics[width=0.3\textwidth]{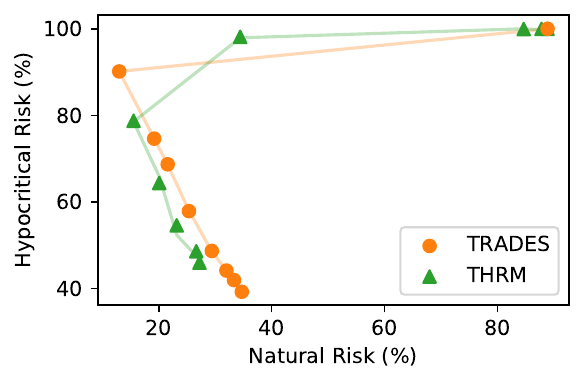}
\caption{
Empirical comparison between TRADES and THRM in terms of the natural risk and the hypocritical risk on misclassified examples under $\linf$ threat model.
Each point represents a model with a different $\lambda$.
Here, the models are trained on the \textsl{Poisoning} data of CIFAR-10.
We observe that, when $\lambda$ is small, TRADES is better than THRM.
This is because TRADES can defense against the Poisoning data~\citep{nakkiran2019a} by minimizing the adversarial risk~\citep{tao2021provable}, while THRM cannot.
However, we note that, when $\lambda$ is large, THRM can achieve a better trade-off than TRADES.
This may be due to that training model using THRM with large $\lambda$ can somewhat bring nontrivial robustness to adversarial perturbations, as shown in Table~\ref{tab:tradeoff-cifar10-poisoning} and Figure~\ref{fig:tradeoff-3ds-adv}.
}
\label{fig:tradeoff-cifar10-poisoning}
\end{figure}

\begin{figure}[h]
   \centering
   \subfigure[CIFAR-10]{
      \centering
      \includegraphics[width=0.30\textwidth]{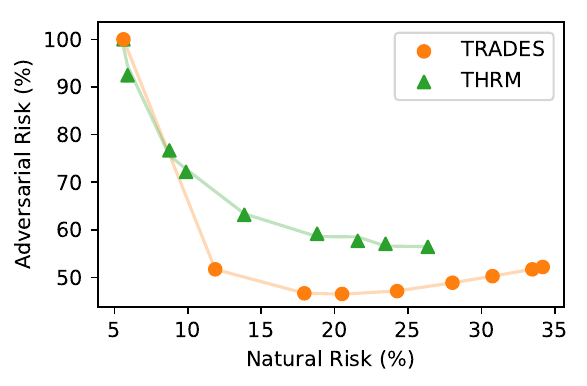}
   }
   \hfill
   \subfigure[CIFAR-100]{
      \centering
      \includegraphics[width=0.30\textwidth]{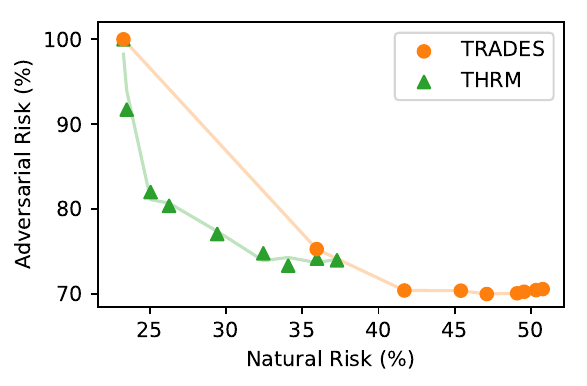}
   }
   \hfill
   \subfigure[Tiny-ImageNet]{
      \centering
      \includegraphics[width=0.30\textwidth]{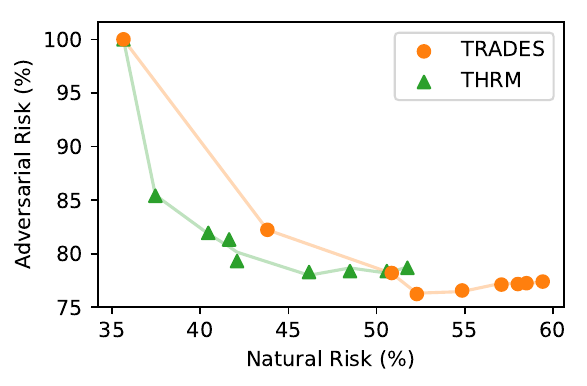}
   }
\caption{
Empirical comparison between TRADES and THRM in terms of natural risk and the adversarial risk on all examples under $\linf$ threat model.
Each point represents a model trained on the \textsl{Quality} data with a different $\lambda$.
\textbf{Description:} This is a plot to illustrate adversarial risk.
The results show that \textit{THRM achieves considerable adversarial robustness} compared with TRADES. We note that THRM is only designed to minimize natural and hypocritical risks. Thus, it is intriguing in some sense that it can achieve considerable adversarial risk. A possible explanation for this might be that improving the hypocritical risk somehow decreases the Lipschitz constant of the model, thereby reducing the adversarial risk.
}
\label{fig:tradeoff-3ds-adv}
\end{figure}

\begin{table}[!h]
\small
\centering
\caption{Empirical comparison between TRADES and THRM on CIFAR-10 under $\linf$ threat model. The models are trained on the \textsl{Quality} data.}
\label{tab:tradeoff-cifar10}
\begin{tabular}{@{}lccccccc@{}}
\toprule
\multirow{2}{*}{Method} & \multirow{2}{*}{$\lambda$} & \multicolumn{3}{c}{Accuracy} & \multicolumn{3}{c}{Risk}                                                   \\ \cmidrule(lr){3-5} \cmidrule(lr){6-8} 
                        &                            & $\cD$   & $\cA$   & $\cF$    & $\cR_{\nat}(f, \cD)$ & $\cR_{\adv}(f, \cD_f^+)$ & $\cR_{\hyp}(f, \cD_f^-)$ \\ \midrule
\multirow{9}{*}{THRM}   & 0                          & 94.38   & 0.00    & 100.00   & 5.62                 & 100.00                   & 100.00                   \\
                        & 1                          & 94.10   & 7.53    & 99.70    & 5.90                 & 92.00                    & 94.92                    \\
                        & 5                          & 91.25   & 23.32   & 99.65    & 8.75                 & 74.44                    & 96.00                    \\
                        & 10                         & 90.12   & 27.83   & 99.31    & 9.88                 & 69.12                    & 93.02                    \\
                        & 20                         & 86.15   & 36.82   & 97.47    & 13.85                & 57.26                    & 81.73                    \\
                        & 40                         & 81.19   & 40.80   & 94.18    & 18.81                & 49.75                    & 69.06                    \\
                        & 60                         & 78.42   & 42.30   & 91.04    & 21.58                & 46.06                    & 58.48                    \\
                        & 80                         & 76.53   & 42.91   & 88.75    & 23.47                & 43.93                    & 52.07                    \\
                        & 100                        & 73.65   & 43.56   & 85.85    & 26.35                & 40.86                    & 46.30                    \\ \midrule
\multirow{9}{*}{TRADES} & 0                          & 94.38   & 0.00    & 100.00   & 5.62                 & 100.00                   & 100.00                   \\
                        & 1                          & 88.13   & 48.26   & 98.88    & 11.87                & 45.24                    & 90.56                    \\
                        & 5                          & 82.06   & 53.30   & 95.84    & 17.94                & 35.05                    & 76.81                    \\
                        & 10                         & 79.49   & 53.44   & 93.81    & 20.51                & 32.77                    & 69.82                    \\
                        & 20                         & 75.74   & 52.84   & 90.23    & 24.26                & 30.24                    & 59.73                    \\
                        & 40                         & 71.97   & 51.08   & 85.82    & 28.03                & 29.03                    & 49.41                    \\
                        & 60                         & 69.24   & 49.70   & 82.97    & 30.76                & 28.22                    & 44.64                    \\
                        & 80                         & 66.53   & 48.27   & 80.25    & 33.47                & 27.45                    & 40.99                    \\
                        & 100                        & 65.82   & 47.78   & 79.30    & 34.18                & 27.41                    & 39.44                    \\ \bottomrule
\end{tabular}
\end{table}

\begin{table}[!h]
\small
\centering
\caption{Empirical comparison between TRADES and THRM on CIFAR-100 under $\linf$ threat model. The models are trained on the \textsl{Quality} data.}
\label{tab:tradeoff-cifar100}
\begin{tabular}{@{}lccccccc@{}}
\toprule
\multirow{2}{*}{Method} & \multirow{2}{*}{$\lambda$} & \multicolumn{3}{c}{Accuracy} & \multicolumn{3}{c}{Risk}                                                   \\ \cmidrule(lr){3-5} \cmidrule(lr){6-8} 
                        &                            & $\cD$   & $\cA$   & $\cF$    & $\cR_{\nat}(f, \cD)$ & $\cR_{\adv}(f, \cD_f^+)$ & $\cR_{\hyp}(f, \cD_f^-)$ \\ \midrule
\multirow{9}{*}{THRM}   & 0                          & 76.72    & 0.01    & 99.95   & 23.28                & 99.99                    & 99.79                    \\
                        & 1                          & 76.51    & 8.28    & 98.02   & 23.49                & 89.18                    & 91.57                    \\
                        & 5                          & 74.94    & 18.00   & 93.11   & 25.06                & 75.98                    & 72.51                    \\
                        & 10                         & 73.73    & 19.62   & 91.57   & 26.27                & 73.39                    & 67.91                    \\
                        & 20                         & 70.57    & 22.94   & 86.94   & 29.43                & 67.49                    & 55.62                    \\
                        & 40                         & 67.55    & 25.21   & 80.76   & 32.45                & 62.68                    & 40.71                    \\
                        & 60                         & 65.92    & 26.67   & 78.03   & 34.08                & 59.54                    & 35.53                    \\
                        & 80                         & 64.03    & 25.84   & 75.83   & 35.97                & 59.64                    & 32.81                    \\
                        & 100                        & 62.72    & 26.03   & 74.10   & 37.28                & 58.50                    & 30.53                    \\ \midrule
\multirow{9}{*}{TRADES} & 0                          & 76.72    & 0.01    & 99.95   & 23.28                & 99.99                    & 99.79                    \\
                        & 1                          & 64.05    & 24.72   & 92.81   & 35.95                & 61.41                    & 80.00                    \\
                        & 5                          & 58.30    & 29.59   & 83.95   & 41.70                & 49.25                    & 61.51                    \\
                        & 10                         & 54.60    & 29.61   & 79.71   & 45.40                & 45.77                    & 55.31                    \\
                        & 20                         & 52.90    & 30.01   & 76.30   & 47.10                & 43.27                    & 49.68                    \\
                        & 40                         & 50.91    & 29.92   & 73.16   & 49.09                & 41.23                    & 45.32                    \\
                        & 60                         & 50.45    & 29.74   & 71.73   & 49.55                & 41.05                    & 42.95                    \\
                        & 80                         & 49.67    & 29.55   & 70.32   & 50.33                & 40.51                    & 41.03                    \\
                        & 100                        & 49.23    & 29.43   & 69.92   & 50.77                & 40.22                    & 40.75                    \\ \bottomrule
\end{tabular}
\end{table}

\begin{table}[!h]
\small
\centering
\caption{Empirical comparison between TRADES and THRM on Tiny-ImageNet under $\linf$ threat model. The models are trained on the \textsl{Quality} data.}
\label{tab:tradeoff-tinyimagenet}
\begin{tabular}{@{}lccccccc@{}}
\toprule
\multirow{2}{*}{Method} & \multirow{2}{*}{$\lambda$} & \multicolumn{3}{c}{Accuracy} & \multicolumn{3}{c}{Risk}                                                   \\ \cmidrule(lr){3-5} \cmidrule(lr){6-8} 
                        &                            & $\cD$   & $\cA$   & $\cF$    & $\cR_{\nat}(f, \cD)$ & $\cR_{\adv}(f, \cD_f^+)$ & $\cR_{\hyp}(f, \cD_f^-)$ \\ \midrule
\multirow{9}{*}{THRM}   & 0                          & 64.35   & 0.00    & 100.00   & 35.65                & 100.00                   & 100.00                   \\
                        & 1                          & 62.54   & 14.59   & 90.30    & 37.46                & 76.67                    & 74.11                    \\
                        & 5                          & 57.91   & 18.06   & 79.87    & 42.09                & 68.81                    & 52.17                    \\
                        & 10                         & 59.54   & 18.67   & 82.21    & 40.46                & 68.64                    & 56.03                    \\
                        & 20                         & 58.36   & 20.69   & 75.69    & 41.64                & 64.55                    & 41.62                    \\
                        & 40                         & 53.83   & 21.71   & 68.61    & 46.17                & 59.67                    & 32.01                    \\
                        & 60                         & 51.51   & 21.63   & 64.31    & 48.49                & 58.01                    & 26.40                    \\
                        & 80                         & 49.42   & 21.62   & 62.27    & 50.58                & 56.25                    & 25.41                    \\
                        & 100                        & 48.25   & 21.31   & 60.60    & 51.75                & 55.83                    & 23.86                    \\ \midrule
\multirow{9}{*}{TRADES} & 0                          & 64.35   & 0.00    & 100.00   & 35.65                & 100.00                   & 100.00                   \\
                        & 1                          & 56.19   & 17.77   & 88.07    & 43.81                & 68.38                    & 72.77                    \\
                        & 5                          & 49.14   & 21.80   & 76.11    & 50.86                & 55.64                    & 53.03                    \\
                        & 10                         & 47.72   & 23.76   & 73.24    & 52.28                & 50.21                    & 48.81                    \\
                        & 20                         & 45.15   & 23.43   & 68.25    & 54.85                & 48.11                    & 42.11                    \\
                        & 40                         & 42.92   & 22.88   & 64.65    & 57.08                & 46.69                    & 38.07                    \\
                        & 60                         & 41.99   & 22.83   & 62.56    & 58.01                & 45.63                    & 35.46                    \\
                        & 80                         & 41.50   & 22.75   & 61.14    & 58.50                & 45.18                    & 33.57                    \\
                        & 100                        & 40.58   & 22.60   & 59.90    & 59.42                & 44.31                    & 32.51                    \\ \bottomrule
\end{tabular}
\end{table}

\begin{table}[!h]
\small
\centering
\caption{Empirical comparison between TRADES and THRM on CIFAR-10 under $\linf$ threat model. The models are trained on the \textsl{Poisoning} data.}
\label{tab:tradeoff-cifar10-poisoning}
\begin{tabular}{@{}lccccccc@{}}
\toprule
\multirow{2}{*}{Method} & \multirow{2}{*}{$\lambda$} & \multicolumn{3}{c}{Accuracy} & \multicolumn{3}{c}{Risk}                                                   \\ \cmidrule(lr){3-5} \cmidrule(lr){6-8} 
                        &                            & $\cD$   & $\cA$   & $\cF$    & $\cR_{\nat}(f, \cD)$ & $\cR_{\adv}(f, \cD_f^+)$ & $\cR_{\hyp}(f, \cD_f^-)$ \\ \midrule
\multirow{9}{*}{THRM}   & 0                          & 11.13   & 0.00    & 100.00   & 88.87                & 100.00                   & 100.00                   \\
                        & 1                          & 12.20   & 0.00    & 99.97    & 87.80                & 100.00                   & 99.97                    \\
                        & 5                          & 15.35   & 0.00    & 99.95    & 84.65                & 100.00                   & 99.94                    \\
                        & 10                         & 65.57   & 19.45   & 99.36    & 34.43                & 70.34                    & 98.14                    \\
                        & 20                         & 84.46   & 39.34   & 96.70    & 15.54                & 53.42                    & 78.76                    \\
                        & 40                         & 79.91   & 43.00   & 92.84    & 20.09                & 46.19                    & 64.36                    \\
                        & 60                         & 76.82   & 44.27   & 89.47    & 23.18                & 42.37                    & 54.57                    \\
                        & 80                         & 73.35   & 44.67   & 86.30    & 26.65                & 39.10                    & 48.59                    \\
                        & 100                        & 72.78   & 44.51   & 85.29    & 27.22                & 38.84                    & 45.96                    \\ \midrule
\multirow{9}{*}{TRADES} & 0                          & 11.13   & 0.00    & 100.00   & 88.87                & 100.00                   & 100.00                   \\
                        & 1                          & 87.00   & 47.52   & 98.72    & 13.00                & 45.38                    & 90.15                    \\
                        & 5                          & 80.81   & 52.17   & 95.13    & 19.19                & 35.44                    & 74.62                    \\
                        & 10                         & 78.41   & 52.83   & 93.24    & 21.59                & 32.62                    & 68.69                    \\
                        & 20                         & 74.65   & 51.93   & 89.32    & 25.35                & 30.44                    & 57.87                    \\
                        & 40                         & 70.60   & 50.00   & 84.91    & 29.40                & 29.18                    & 48.67                    \\
                        & 60                         & 68.01   & 48.92   & 82.13    & 31.99                & 28.07                    & 44.14                    \\
                        & 80                         & 66.67   & 48.39   & 80.65    & 33.33                & 27.42                    & 41.94                    \\
                        & 100                        & 65.29   & 47.25   & 78.91    & 34.71                & 27.63                    & 39.24                    \\ \bottomrule
\end{tabular}
\end{table}

\clearpage
\section{Proofs}
\label{app:proofs}

In this section, we provide the proofs of our theoretical results in Section~\ref{sec:hyp-risk} and Appendix~\ref{app:linfnet-details}.

\subsection{Proof of Theorem~\ref{thrm:adv-risk-decomposition}}

\textbf{Theorem~\ref{thrm:adv-risk-decomposition} (restated).}
\emph{
$\cR_{\adv}(f_{\vtheta}, \cD) = \cR_{\nat}(f_{\vtheta}, \cD) + (1 - \cR_{\nat}(f_{\vtheta}, \cD)) \cdot \cR_{\adv}(f_{\vtheta}, \cD_{f_{\vtheta}}^+)$.
}
\begin{proof}
For the sake of brevity, here we use the notation $f$ to represent $f_{\vtheta}$.
Denote by $\vx_{\adv} = \arg \max_{\| \vx' - \vx \| \leq \epsilon} \ind (f_{\vtheta}(\vx') \neq y)$, we have
\begin{equation*}
    \begin{split}
        \cR_{\adv}(f, \cD) & = \underset{(\vx, y) \sim \cD}{\bbE}  \left[ \max_{\| \vx' - \vx \| \leq \epsilon} \ind (f( \vx') \neq y) \right] \\
        & = \underset{(\vx, y) \sim \cD}{\bbE}  \left[\ind (f( \vx_{\adv}) \neq y) \right] \\
        & = \underset{(\vx, y) \sim \cD}{\bbE}  \left[\ind (f( \vx_{\adv}) \neq y) \cdot 1 \right] \\
        & = \underset{(\vx, y) \sim \cD}{\bbE}  \left[\ind (f( \vx_{\adv}) \neq y) \cdot (\ind(f(\vx)=y)+\ind(f(\vx)\neq y)) \right] \\
        & = \underset{(\vx, y) \sim \cD}{\bbE}  \left[\ind (f( \vx_{\adv}) \neq y) \cdot \ind(f(\vx)=y)) \right] + \underset{(\vx, y) \sim \cD}{\bbE}  \left[\ind (f( \vx_{\adv}) \neq y) \cdot \ind(f(\vx) \neq y)) \right] \\
        & = \underset{(\vx, y) \sim \cD}{\bbE}  \left[\ind (f( \vx_{\adv}) \neq y) \cdot \ind(f(\vx)=y)) \right] + \underset{(\vx, y) \sim \cD}{\bbE}  \left[\ind(f(\vx) \neq y)) \right] \\
        & = \underset{(\vx, y) \sim \cD}{\bbE} \left[\ind(f(\vx)=y) \right] \cdot \underset{(\vx, y) \sim \cD_f^+}{\bbE}  \left[\ind (f( \vx_{\adv}) \neq y) \right] + \underset{(\vx, y) \sim \cD}{\bbE}  \left[\ind(f(\vx) \neq y)) \right] \\
        & = \underset{(\vx, y) \sim \cD}{\bbE} \left[1-\ind(f(\vx) \neq y) \right] \cdot \underset{(\vx, y) \sim \cD_f^+}{\bbE}  \left[\ind (f( \vx_{\adv}) \neq y) \right] + \underset{(\vx, y) \sim \cD}{\bbE}  \left[\ind(f(\vx) \neq y)) \right] \\
        & = (1 - \cR_{\nat}(f, \cD)) \cdot \cR_{\adv}(f, \cD_f^+) + \cR_{\nat}(f, \cD) \\
        & = \cR_{\nat}(f, \cD) + (1 - \cR_{\nat}(f, \cD)) \cdot \cR_{\adv}(f, \cD_f^+) \\
    \end{split}
\end{equation*}
\end{proof}

\subsection{Proof of Theorem~\ref{thrm:hyp-risk-decomposition}}

\textbf{Theorem~\ref{thrm:hyp-risk-decomposition} (restated).}
\emph{
$\cR_{\hyp}(f_{\vtheta}, \cD) = 1 - (1 - \cR_{\hyp}(f_{\vtheta}, \cD_{f_{\vtheta}}^-)) \cdot \cR_{\nat}(f_{\vtheta}, \cD)$.
}
\begin{proof}
For the sake of brevity, here we use the notation $f$ to represent $f_{\vtheta}$.
Denote by $\vx_{\hyp} = \arg \max_{\| \vx' - \vx \| \leq \epsilon} \ind (f(\vx')=y)$, we have
\begin{equation*}
    \begin{split}
        \cR_{\hyp}(f, \cD) & = \underset{(\vx, y) \sim \cD}{\bbE}  \left[ \max_{\| \vx' - \vx \| \leq \epsilon} \ind (f( \vx') = y) \right] \\
        & = \underset{(\vx, y) \sim \cD}{\bbE}  \left[\ind (f( \vx_{\hyp}) = y) \right] \\
        & = 1 - \underset{(\vx, y) \sim \cD}{\bbE}  \left[\ind (f( \vx_{\hyp}) \neq y) \right] \\
        & = 1 - \underset{(\vx, y) \sim \cD}{\bbE}  \left[\ind (f( \vx_{\hyp}) \neq y) \cdot 1 \right] \\
        & = 1 - \underset{(\vx, y) \sim \cD}{\bbE}  \left[\ind (f( \vx_{\hyp}) \neq y) \cdot (\ind(f(\vx)=y)+\ind(f(\vx)\neq y)) \right] \\
        & = 1 - \underset{(\vx, y) \sim \cD}{\bbE}  \left[\ind (f( \vx_{\hyp}) \neq y) \cdot \ind(f(\vx)\neq y) \right] \\
        & = 1 - \underset{(\vx, y) \sim \cD}{\bbE} \left[\ind(f(\vx)\neq y\right] \cdot \underset{(\vx, y) \sim \cD_f^-}{\bbE}  \left[\ind (f( \vx_{\hyp}) \neq y) \right] \\
        & = 1 - \underset{(\vx, y) \sim \cD}{\bbE} \left[\ind(f(\vx)\neq y\right] \cdot \underset{(\vx, y) \sim \cD_f^-}{\bbE}  \left[1 - \ind (f( \vx_{\hyp}) = y) \right] \\
        & = 1 - \underset{(\vx, y) \sim \cD}{\bbE} \left[\ind(f(\vx)\neq y\right] \cdot \underset{(\vx, y) \sim \cD_f^-}{\bbE}  \left[1 - \ind (f( \vx_{\hyp}) = y) \right] \\
        & = 1 - \cR_{\nat}(f, \cD) \cdot (1 - \cR_{\hyp}(f, \cD_f^-)) \\
        & = 1 - (1 - \cR_{\hyp}(f, \cD_f^-)) \cdot \cR_{\nat}(f, \cD) \\
    \end{split}
\end{equation*}
\end{proof}

\subsection{Proof of Theorem~\ref{thrm:sta-risk-decomposition}}

\begin{lemma}
\label{lemma:inquality}
$\cR_{\hyp}(f_{\vtheta}, \cD_{f_{\vtheta}}^-) \leq \overline{\cR}_{\hyp}(f_{\vtheta}, \cD_{f_{\vtheta}}^-) \leq \cR_{\sta}(f, \cD_{f_{\vtheta}}^-)$, where $\overline{\cR}_{\hyp}(f_{\vtheta}, \cD_{f_{\vtheta}}^-) = \bbE_{(\vx,y)\sim \cD_{f_{\vtheta}}^-} [\ind (f_{\vtheta}(\vx_{\hyp}) \neq f_{\vtheta}(\vx))]$ and $\vx_{\hyp} = \arg \max_{\| \vx' - \vx \| \leq \epsilon} \ind (f_{\vtheta}(\vx')=y)$.
\end{lemma}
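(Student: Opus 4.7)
The plan is to establish both inequalities pointwise on $\cD_{f_{\vtheta}}^-$ and then take expectations, which preserves the ordering by monotonicity.

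For the first inequality, $\cR_{\hyp}(f_{\vtheta}, \cD_{f_{\vtheta}}^-) \leq \overline{\cR}_{\hyp}(f_{\vtheta}, \cD_{f_{\vtheta}}^-)$, I would fix an arbitrary $(\vx, y) \sim \cD_{f_{\vtheta}}^-$, which by definition satisfies $f_{\vtheta}(\vx) \neq y$. Let $\vx_{\hyp} = \arg\max_{\|\vx'-\vx\|\leq \epsilon} \ind(f_{\vtheta}(\vx') = y)$. The key observation is that whenever $f_{\vtheta}(\vx_{\hyp}) = y$ holds, we must also have $f_{\vtheta}(\vx_{\hyp}) \neq f_{\vtheta}(\vx)$, simply because $f_{\vtheta}(\vx) \neq y$. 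Hence the pointwise inequality $\ind(f_{\vtheta}(\vx_{\hyp}) = y) \leq \ind(f_{\vtheta}(\vx_{\hyp}) \neq f_{\vtheta}(\vx))$ holds (the other case $f_{\vtheta}(\vx_{\hyp}) \neq y$ makes the left-hand indicator $0$, so the inequality is trivial). Taking $\bbE_{(\vx,y)\sim \cD_{f_{\vtheta}}^-}$ on both sides yields the claim.

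For the second inequality, $\overline{\cR}_{\hyp}(f_{\vtheta}, \cD_{f_{\vtheta}}^-) \leq \cR_{\sta}(f_{\vtheta}, \cD_{f_{\vtheta}}^-)$, I would simply note that $\vx_{\hyp}$ is a feasible point of the maximization defining $\cR_{\sta}$ because $\|\vx_{\hyp} - \vx\| \leq \epsilon$. Consequently,
\[
\ind(f_{\vtheta}(\vx_{\hyp}) \neq f_{\vtheta}(\vx)) \;\leq\; \max_{\|\vx'-\vx\|\leq \epsilon} \ind(f_{\vtheta}(\vx') \neq f_{\vtheta}(\vx)),
\]
and taking expectations concludes the proof.

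There is no real obstacle; the argument is essentially a bookkeeping exercise with indicator functions. The only subtlety worth flagging is that restricting the expectation to $\cD_{f_{\vtheta}}^-$ is essential for the first inequality: if one allowed samples with $f_{\vtheta}(\vx) = y$, then $\ind(f_{\vtheta}(\vx_{\hyp}) = y)$ could equal $1$ while $\ind(f_{\vtheta}(\vx_{\hyp}) \neq f_{\vtheta}(\vx))$ equals $0$, breaking the pointwise bound.
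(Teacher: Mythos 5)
Your proof is correct and follows essentially the same route as the paper: the pointwise bound $\ind(f_{\vtheta}(\vx_{\hyp})=y)\leq \ind(f_{\vtheta}(\vx_{\hyp})\neq f_{\vtheta}(\vx))$ using $f_{\vtheta}(\vx)\neq y$ on $\cD_{f_{\vtheta}}^-$, followed by the feasibility of $\vx_{\hyp}$ in the maximization defining $\cR_{\sta}$, with expectations taken at the end. The paper merely spells out the case analysis for both inequalities a bit more explicitly; there is no substantive difference.
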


\begin{proof}
For the sake of brevity, here we use the notation $f$ to represent $f_{\vtheta}$.
Remember that $f(\vx) \neq y$ for any $(\vx, y) \sim \cD_f^-$.
For the first inequality $\cR_{\hyp}(f, \cD_f^-) \leq \overline{\cR}_{\hyp}(f, \cD_f^-)$, we have
\begin{equation*}
    \begin{split}
        \cR_{\hyp}(f, \cD_f^-) & = \underset{(\vx, y) \sim \cD_f^-}{\bbE}  \left[ \max_{\| \vx' - \vx \| \leq \epsilon} \ind (f( \vx') = y) \right] \\
        & = \underset{(\vx, y) \sim \cD_f^-}{\bbE}  \left[\ind (f( \vx_{\hyp}) = y) \right] \\
        & \leq \underset{(\vx, y) \sim \cD_f^-}{\bbE}  \left[\ind (f( \vx_{\hyp}) \neq f(\vx)) \right], \\
    \end{split}
\end{equation*}
where the above inequality involves two conditions:
\begin{equation*}
    \begin{aligned}
        \ind (f(\vx_{\hyp}) = y) =\left\{
        \begin{array}{ll}
            1 \quad = \quad \ind (f( \vx_{\hyp}) \neq f( \vx)), & \text { if } f\left(\vx_{\hyp}\right) = y, \\
            0 \quad \leq \quad \ind (f( \vx_{\hyp}) \neq f( \vx)), & \text { if } f\left(\vx_{\hyp}\right) \neq y.
        \end{array}\right.
    \end{aligned}
\end{equation*}
For the second inequality $\overline{\cR}_{\hyp}(f, \cD_f^-) \leq \cR_{\sta}(f, \cD_f^-)$, we have
\begin{equation*}
    \begin{split}
        \overline{\cR}_{\hyp}(f, \cD_f^-) & = \underset{(\vx, y) \sim \cD_f^-}{\bbE} [\ind (f(\vx_{\hyp}) \neq f(\vx))] \\
        & \leq \underset{(\vx, y) \sim \cD_f^-}{\bbE} \left[ \max_{\| \vx' - \vx \| \leq \epsilon} \ind (f( \vx') \neq f(\vx))\right], \\
    \end{split}
\end{equation*}
where the above inequality involves two conditions:
\begin{equation*}
    \begin{aligned}
        \ind (f(\vx_{\hyp}) \neq f(\vx)) =\left\{
        \begin{array}{ll}
            1 \quad = \quad \max_{\| \vx' - \vx \| \leq \epsilon} \ind (f( \vx') \neq f(\vx)), & \text { if } f\left(\vx_{\hyp}\right) = y, \\
            0 \quad \leq \quad \max_{\| \vx' - \vx \| \leq \epsilon} \ind (f( \vx') \neq f(\vx)), & \text { if } f\left(\vx_{\hyp}\right) \neq y.
        \end{array}\right.
    \end{aligned}
\end{equation*}
\end{proof}

\noindent
\textbf{Theorem~\ref{thrm:sta-risk-decomposition} (restated).}
\emph{
$\cR_{\sta}(f_{\vtheta}, \cD) = (1 - \cR_{\nat}(f_{\vtheta}, \cD)) \cdot \cR_{\adv}(f_{\vtheta}, \cD_{f_{\vtheta}}^+) + \cR_{\nat}(f_{\vtheta}, \cD) \cdot \cR_{\sta}(f_{\vtheta}, \cD_{f_{\vtheta}}^-)$, where we have $\cR_{\sta}(f_{\vtheta}, \cD_{f_{\vtheta}}^-) \ge \cR_{\hyp}(f_{\vtheta}, \cD_{f_{\vtheta}}^-)$.
}
\begin{proof}
For the sake of brevity, here we use the notation $f$ to represent $f_{\vtheta}$.
\begin{equation*}
    \begin{split}
        \cR_{\sta}(f, \cD) & = \underset{(\vx, y) \sim \cD}{\bbE}  \left[ \max_{\| \vx' - \vx \| \leq \epsilon} \ind (f( \vx') \neq f(\vx)) \right] \\
        & = \underset{(\vx, y) \sim \cD}{\bbE}  \left[ \max_{\| \vx' - \vx \| \leq \epsilon} \ind (f( \vx') \neq f(\vx)) \cdot 1 \right] \\
        & = \underset{(\vx, y) \sim \cD}{\bbE}  \left[ \max_{\| \vx' - \vx \| \leq \epsilon} \ind (f( \vx') \neq f(\vx)) \cdot (\ind(f(\vx)=y)+\ind(f(\vx)\neq y)) \right] \\
        & = \underset{(\vx, y) \sim \cD}{\bbE}  \left[ \max_{\| \vx' - \vx \| \leq \epsilon} \ind (f( \vx') \neq f(\vx)) \cdot \ind(f(\vx)=y) \right] \\
        & \quad + \underset{(\vx, y) \sim \cD}{\bbE}  \left[ \max_{\| \vx' - \vx \| \leq \epsilon} \ind (f( \vx') \neq f(\vx)) \cdot \ind(f(\vx) \neq y) \right] \\
        & = \underset{(\vx, y) \sim \cD}{\bbE}  \left[ \max_{\| \vx' - \vx \| \leq \epsilon} \ind (f( \vx') \neq y) \cdot \ind(f(\vx)=y) \right] \\
        & \quad + \underset{(\vx, y) \sim \cD}{\bbE}  \left[ \max_{\| \vx' - \vx \| \leq \epsilon} \ind (f( \vx') \neq f(\vx)) \cdot \ind(f(\vx) \neq y) \right] \\
        & = \underset{(\vx, y) \sim \cD}{\bbE} \left[\ind(f(\vx)=y)\right] \cdot \underset{(\vx, y) \sim \cD_f^+}{\bbE} \left[ \max_{\| \vx' - \vx \| \leq \epsilon} \ind (f( \vx') \neq y) \right] \\
        & \quad + \underset{(\vx, y) \sim \cD}{\bbE} \left[\ind(f(\vx) \neq y)\right] \cdot \underset{(\vx, y) \sim \cD_f^-}{\bbE}  \left[ \max_{\| \vx' - \vx \| \leq \epsilon} \ind (f( \vx') \neq f(\vx))\right] \\
        & = \underset{(\vx, y) \sim \cD}{\bbE} \left[1-\ind(f(\vx)=y)\right] \cdot \underset{(\vx, y) \sim \cD_f^+}{\bbE} \left[ \max_{\| \vx' - \vx \| \leq \epsilon} \ind (f( \vx') \neq y) \right] \\
        & \quad + \underset{(\vx, y) \sim \cD}{\bbE} \left[\ind(f(\vx) \neq y)\right] \cdot \underset{(\vx, y) \sim \cD_f^-}{\bbE}  \left[ \max_{\| \vx' - \vx \| \leq \epsilon} \ind (f( \vx') \neq f(\vx))\right] \\
        & = (1 - \cR_{\nat}(f, \cD)) \cdot \cR_{\adv}(f, \cD_f^+) + \cR_{\nat}(f, \cD) \cdot \cR_{\sta}(f, \cD_f^-) \\
    \end{split}
\end{equation*}
Then, by combining Lemma~\ref{lemma:inquality}, we have $\cR_{\sta}(f_{\vtheta}, \cD_{f_{\vtheta}}^-) \ge \cR_{\hyp}(f_{\vtheta}, \cD_{f_{\vtheta}}^-)$.
\end{proof}

\subsection{Proof of Corollary~\ref{coro:certify-hyp-risk}}
\label{app:proof-certify-hyp-risk}

 \citet{zhang2021towards} showed that $\linf$-dist nets are 1-Lipschitz with respect to $\linf$-norm, which is a nice theoretical property in controlling the robustness of the model.
 
\begin{fact} (Fact 3.4 in~\citet{zhang2021towards})
\label{fact:1lipschitz}
Any $\linf$-dist net $\vg(\cdot)$ is 1-Lipschitz with respect to $\linf$-norm, i.e., for any $\vx_1, \vx_2 \in \bbR^d$, we have $\| \vg(\vx_1) - \vg(\vx_2) \|_{\infty} \leq \| \vx_1 - \vx_2 \|_{\infty}$.
\end{fact}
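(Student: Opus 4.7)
The plan is to verify the 1-Lipschitz bound by structural induction on the layers of the network, since an $\linf$-dist net is built by stacking layers whose neurons each compute an $\linf$-distance function of the form $n_{\vw,b}(\vx) = \|\vw - \vx\|_\infty + b$. Composition of 1-Lipschitz maps with respect to a common norm is itself 1-Lipschitz, so it suffices to show that a single layer of $\linf$-distance neurons is non-expansive in $\linf$-norm, and then appeal to composition across depth.

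First I would establish the per-neuron bound via the reverse triangle inequality applied to the $\linf$-norm:
\begin{equation*}
\big| n_{\vw,b}(\vx_1) - n_{\vw,b}(\vx_2) \big| = \Big| \|\vw - \vx_1\|_\infty - \|\vw - \vx_2\|_\infty \Big| \leq \|(\vw - \vx_1) - (\vw - \vx_2)\|_\infty = \|\vx_1 - \vx_2\|_\infty,
\end{equation*}
where the bias cancels in the difference. Lifting this to a full layer $L(\vx) = (n_1(\vx), \ldots, n_m(\vx))$ is immediate because the output $\linf$-norm is a coordinate-wise maximum:
\begin{equation*}
\|L(\vx_1) - L(\vx_2)\|_\infty = \max_{i} \big| n_i(\vx_1) - n_i(\vx_2) \big| \leq \|\vx_1 - \vx_2\|_\infty.
\end{equation*}
Composing $K$ such layers then gives $\|\vg(\vx_1) - \vg(\vx_2)\|_\infty \leq \|\vx_1 - \vx_2\|_\infty$ by a one-line induction on depth.

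There is no substantive obstacle here---this is a clean consequence of the fact that $\linf$-distance neurons are precisely the elementary building blocks designed to be 1-Lipschitz in $\linf$-norm, together with the fact that the $\linf$-norm plays well with the coordinate-wise maximum that stacks these neurons into a vector-valued layer. The only minor subtlety would be in handling any auxiliary components such as a final affine readout that might appear in practical implementations; for the pure $\linf$-dist architecture of \citet{zhang2021towards} as stated in Fact~\ref{fact:1lipschitz}, the chain of $\linf$-distance layers yields the bound directly with no additional operator-norm factors to track.
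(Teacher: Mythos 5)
Your proof is correct and follows essentially the same argument as the cited source (this paper imports the statement without proof from \citet{zhang2021towards}, whose proof likewise combines the per-neuron reverse-triangle-inequality bound with the fact that a coordinate-wise maximum and composition of layers preserve 1-Lipschitzness in $\linf$-norm). The only cosmetic point is the sign flip in the output layer $\vg(\vx)=(-x_1^{(L)},\ldots,-x_M^{(L)})$ of Eq.~(\ref{eq:linf-net}), which clearly leaves the Lipschitz constant unchanged.
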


Since $\vg$ is 1-Lipschitz with respect to $\linf$-norm, if the perturbation over $\vx$ is rather small, the change of the output can be bounded and the predication of the perturbed data $\vx'$ will not change to $y$ as long as $\max_i g_i(\vx') > g_y(\vx')$, which directly bounds the certified hypocritical risk.

\noindent
\textbf{Corollary~\ref{coro:certify-hyp-risk} (restated).}
\emph{
Given the $\linf$-dist net $f(\vx) = \arg \max_{i\in [M]} g_i(\vx)$ defined in Eq.~\ref{eq:linf-net} and the distribution of misclassified examples $\cD_f^-$ with respect to $f$, we have
$$
\textstyle
\cR_{\hyp}(f, \cD_f^-) \leq \bbE_{(\vx,y)\sim \cD_f^-} \left[\ind (\max_i g_i(\vx) - g_y(\vx) < 2 \epsilon) \right].
$$
}
\begin{proof}
Firstly, similar to the robust radius~\citep{zhai2020macer, zhang2021towards}, we define the \textit{hypocritical radius} as:
\begin{equation*}
    \mathrm{HR}(f; \vx, y) = \inf_{f(\vx') = y} \| \vx'-\vx \|_{\infty},
\end{equation*}
which represents the minimal radius required to correct model prediction.
Exactly computing the the hypocritical radius of a classifier induced by a standard deep neural network is very difficult. Next we seek to derive a tight lower bound of $\mathrm{HR}(f; \vx, y)$ using the $\linf$-dist net. Considering a misclassified example $\vx$ such that $f(\vx) \neq y$, we define $\mathrm{margin}(f; \vx, y)$ as the difference between the largest and the $y$-th elements of $\vg(\vx)$:
\begin{equation*}
    \mathrm{margin}(f; \vx, y) = \max_i g_i(\vx) - g_y(\vx).
\end{equation*}
Then for any $\vx'$ satisfying $\|\vx-\vx'\|_{\infty} < \mathrm{margin}(f; \vx, y)/2$, each element of $\vg(\vx)$ can move at most $\mathrm{margin}(f; \vx, y)/2$ when $\vx$ changes to $\vx'$ due to that $\vg(\vx)$ is 1-Lipschitz. Therefore, $g_y(\vx')$ cannot be the largest element of $\vg(\vx)$, that is, $f(\vx') \neq y$.
In other words, $\mathrm{margin}(f; \vx, y)/2$ is a provable lower bound of the hypocritical radius: $\mathrm{margin}(f; \vx, y)/2 \leq \mathrm{HR}(f; \vx, y)$.

This certified radius leads to a guaranteed upper bound of the hypocritical risk:
\begin{equation*}
    \max_{\| \vx'-\vx \|_{\infty} \leq \epsilon} \ind (f(\vx')=y) \leq \ind(\mathrm{margin}(f; \vx, y)/2 < \epsilon),
\end{equation*}
i.e., an example can be perturbed to be classified only if the difference between $\max_i g_i(\vx)$ and $g_y(\vx)$ is less than $\epsilon$. The expectation of the above inequality over $\cD_f^-$ serves as a performance metric of the provable hypocritical robustness:
\begin{equation*}
    \underset{(\vx, y) \sim \cD_f^-}{\bbE} \left[ \max_{\| \vx'-\vx \|_{\infty} \leq \epsilon} \ind (f(\vx')=y) \right]
    \leq
    \underset{(\vx, y) \sim \cD_f^-}{\bbE} \left[\ind (\max_i g_i(\vx) - g_y(\vx) < 2 \epsilon) \right].
\end{equation*}
\end{proof}

\section{Crafting Hypocritical Data via Various Attacks}
\label{app:various_attacks}

In addition to the PGD attack adopted in the main text, many other techniques (such as FGSM~\citep{goodfellow2014explaining} and C\&W~\citep{carlini2017towards}) that were developed for crafting adversarial examples can also be used for the current study of hypocritical attacks. To support this claim, we adaptively modify their objective function to craft hypocritical examples. Results on CIFAR-10 are summarized in Table~\ref{tab:various_attacks}, where PGD-20/100 denotes the PGD attack with 20/100 iterations, and C\&W-100 is similar. We observe that for hypocritical attacks: \textit{i)} FGSM is relatively weak, \textit{ii)} PGD-100 is slightly stronger than PGD-20, and \textit{iii)} C\&W is slightly better than PGD. These phenomena are similar to the case of adversarial examples, indicating that future work may easily adapt more advanced techniques, such as AutoAttack~\citep{croce2020reliable}, to further boost hypocritical attacks.

\begin{table}[!h]
\footnotesize
\centering
\vspace{-4px}
\caption{
Verification accuracy (\%) of ResNet-18 models under $\linf$ threat model across various attack techniques. Here, the THRM model is trained with $\lambda=40$, whose clean accuracy is comparable to the TRADES model with $\lambda=6$.
}
\vspace{-4px}
\label{tab:various_attacks}
\begin{tabular}{@{}lrrrrr@{}}
\toprule
Model            & Clean & FGSM  & PGD-20          & PGD-100         & C\&W-100          \\ \midrule
Naive            & 8.46  & 28.29 & 82.56           & 84.24           & \textbf{88.23}  \\
Quality (NT)     & 94.38 & 99.60 & \textbf{100.00} & \textbf{100.00} & \textbf{100.00} \\
Quality (PGD-AT) & 84.08 & 88.96 & 96.92           & 96.95           & \textbf{97.29}  \\
Quality (TRADES) & 81.05 & 86.28 & 95.17           & 95.19           & \textbf{96.08}  \\
Quality (THRM)   & 81.19 & 86.10 & 94.18           & 94.20           & \textbf{95.13}  \\ \bottomrule
\end{tabular}
\vspace{-5px}
\end{table}

\section{Trade-off between Risks}
\label{app:toy-tradeoffs}

\subsection{Example 1}

Motivated by the trade-off between natural risk and adversarial risk~\citep{tsipras2018robustness, zhang2019theoretically}, we notice that there may also exist a tension between the goal of natural risk minimization and hypocritical risk minimization. To illustrate the phenomenon, similar to the toy example in~\citet{zhang2019theoretically}, we provide another toy example here.

\begin{figure}[h]
\vspace{-8px}
    \centering
    \includegraphics[width=0.45\textwidth]{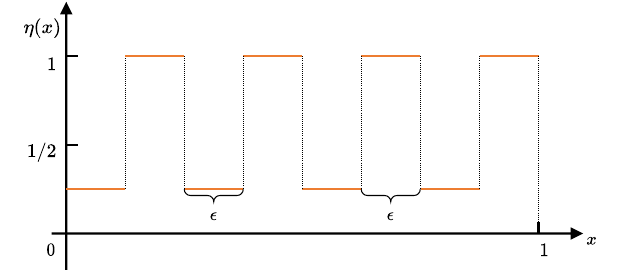}
    \caption{Visualization of $\eta(x)$.}
    \label{fig:toy-exp1}
\end{figure}

Consider the case $(x, y) \in \bbR \times \{-1, +1\}$ from a distribution $\cD$, where the marginal distribution over the instance space is a uniform distribution over $[0, 1]$, and for $k=0,1,\cdots, \lceil\frac{1}{2\epsilon} - 1\rceil$,
\begin{equation*}
    \begin{split}
        \eta(x) & = \operatorname{Pr}(y=+1 \mid x) \\
        & = \left\{
            \begin{array}{ll}
                1/4, & x \in[2 k \epsilon,(2 k+1) \epsilon), \\
                1, & x \in((2 k+1) \epsilon,(2 k+2) \epsilon].
            \end{array}\right.
    \end{split}
\end{equation*}
See Figure~\ref{fig:toy-exp1} for visualization of $\eta(x)$. We consider two classifiers: a) the Bayes optimal classifier $\operatorname{sign}(2\eta(x) - 1)$; b) the all-one classifier which always outputs ``positive". Table~\ref{tab:toy-exp1-tradeoff} displays the trade-off between the natural risk and the hypocritical risk on misclassified examples: the minimal natural risk is achieved by the Bayes optimal classifier with large hypocritical risk, while the minimal hypocritical risk is achieved by the all-one classifier with large natural risk.

\begin{table}[h]
\small
   \caption{Comparison of Bayes optimal classifier and all-one classifier.}
   \label{tab:toy-exp1-tradeoff}
   \vspace{-5px}
   \begin{center}
      \begin{tabular}{lcc}
         \toprule
         & Bayes Optimal Classifier & All-One Classifier \\
         \midrule
         $\cR_{\nat}(f, \cD)$ & $1/8$ (optimal) & $3/8$ \\
         $\cR_{\hyp}(f, \cD)$ & $1$ & $5/8$ \\
         $\cR_{\adv}(f, \cD)$ & $1$ & $3/8$ \\
         $\cR_{\hyp}(f, \cD_f^-)$ & $1$ & $0$ (optimal) \\
         $\cR_{\adv}(f, \cD_f^+)$ & $1$ & $0$ (optimal) \\
         \bottomrule
      \end{tabular}
   \end{center}
\end{table}

\subsection{Example 2}

In Example~1, the optimal solution for adversarial risk and hypocritical risk is the same by coincidence. Next, we show that the minimizers of natural risk and hypocritical risk are not necessarily consistent by providing the second toy example, which is motivated by the well-known trade-off between precision and recall~\citep{buckland1994relationship}.

Consider the case $(\vx, y) \in \bbR^2 \times \{-1, +1\}$ from a distribution $\cD$, where the marginal distribution over the instance space is a uniform distribution over $[0, 1]^2$. Let the decision boundary of the oracle (ground truth) be a circle:
\begin{equation*}
    \mathcal{O}(\vx) = \operatorname{sign}(r - \|\vx-\vc\|_2),
\end{equation*}
where the center $\vc=(0.5, 0.5)^\top$ and the radius $r=0.4$. The points inside the circle are labeled as belonging to the positive class, otherwise they are labeled as belonging to the negative class. We consider the linear classifier $f$ with fixed $\vw = (0, 1)^\top$ and a tunable threshold $b$:
\begin{equation*}
    f(\vx) = \operatorname{sign}(\vw^\top \vx - b) = \operatorname{sign}(x_2 - b).
\end{equation*}

\begin{figure}[t]
\vspace{-8px}
    \centering
    \includegraphics[width=0.5\textwidth]{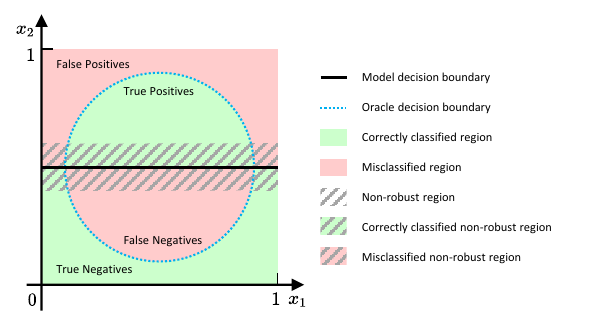}
\vspace{-8px}
    \caption{Visualization of decision boundary for oracle and model.}
    \label{fig:toy-exp2-decision_boundary}
\end{figure}

\begin{figure}[t]
    \centering
    \includegraphics[width=0.6\textwidth]{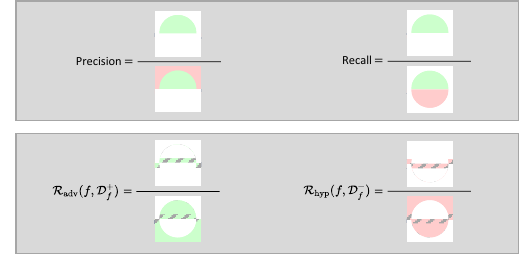}
    \caption{Visualization of formulation for precision, recall, adversarial risk, and hypocritical risk. These metrics can be viewed as the ratio of specific areas in the section example.}
    \label{fig:toy-exp2-risk_vis}
\end{figure}

\begin{figure}[!h]
   \centering
   \begin{minipage}{.4\textwidth}
      \begin{center}
         \includegraphics[width=1.0\textwidth]{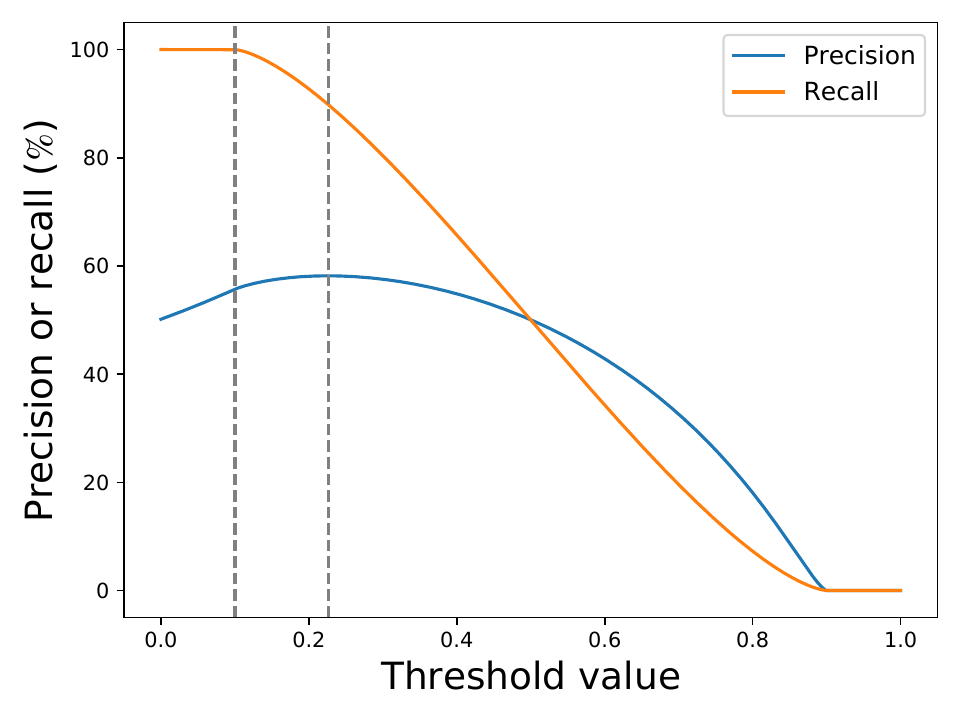}
      \end{center}
      \caption{The tradeoff between precision and recall in the section example. }
      \label{fig.tradeoff.precision_recall_curve}
   \end{minipage}%
   \quad \quad
   \begin{minipage}{.4\textwidth}
      \begin{center}
         \includegraphics[width=1.0\textwidth]{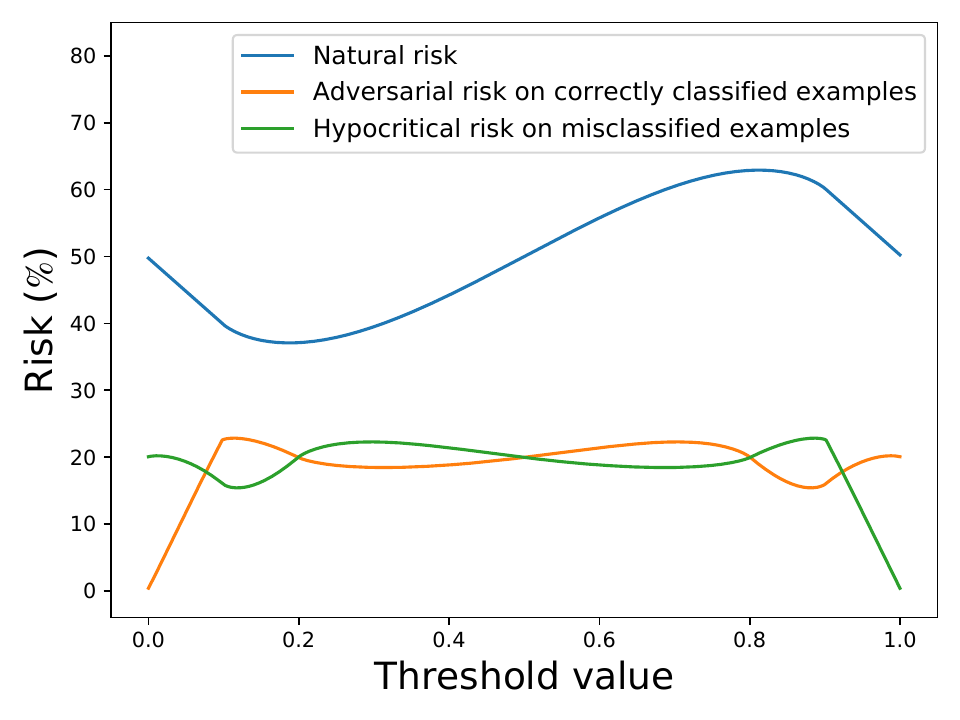}
      \end{center}
      \caption{The tradeoff between risks in the section example. }
      \label{fig.tradeoff.adv_hyp_curve}
   \end{minipage}
\end{figure}

See Figure~\ref{fig:toy-exp2-decision_boundary} for visualization of decision boundary for the oracle and the linear classifier over the instance space. Here we choose the $l_2$-norm with $\epsilon = 0.1$ as the threat model. We tune the threshold $b$ of the classifier over $[0, 1]$ to show the trade-off between metrics. The computation of the metrics is visualized in Figure~\ref{fig:toy-exp2-risk_vis}. Precision is the number of true positives divided by the the sum of true positives and false positives. Recall is the number of true positives divided by the sum of true positives and false negatives. We compare the adversarial risk on correctly classified examples and the hypocritical risk on misclassified examples.

Figure \ref{fig.tradeoff.precision_recall_curve} plots the curve of precision and recall versus the threshold $b$. As we can see, there is a explicit precision-recall tradeoff between the two gray dotted lines. Similarly, Figure \ref{fig.tradeoff.adv_hyp_curve} plots the curve of $\cR_{\adv}(f, \cD_f^+)$ and $\cR_{\hyp}(f, \cD_f^-)$ versus the threshold $b$. As we can see, the tradeoff exists almost everywhere: as the adversarial risk increases, the hypocritical risk decreases, and vise versa.

\clearpage
\section{Details on THRM}
\label{app:thrm-details}

In this section, we derive the objective of THRM in Eq.~(\ref{eq:thrm}) from the theoretical perspective.

As a by-product of Lemma~\ref{lemma:inquality}, we get a tighter upper bound of the hypocritical risk on misclassified examples, i.e., $\overline{\cR}_{\hyp}(f_{\vtheta}, \cD_{f_{\vtheta}}^-)$ defined in Lemma~\ref{lemma:inquality}. Similar to TRADES, we propose to minimize the composition of the natural risk and this tighter upper bound: $\cR_{\nat}(f_{\vtheta}, \cD) + \lambda \overline{\cR}_{\hyp}(f_{\vtheta}, \cD_{f_{\vtheta}}^-)$.
However, optimization over the 0-1 loss is intractable in practice.
Inspired by~\citet{madry2018towards, zhang2019theoretically, wang2019improving}, we also seek for proper surrogate loss functions which are computationally tractable. Specifically, we adopt the commonly used cross entropy (CE) loss as the surrogate loss for the indicator function $\ind (f_{\vtheta}(\vx) \neq y)$ in the natural risk. Further, observed that $\overline{\cR}_{\hyp}(f_{\vtheta}, \cD_{f_{\vtheta}}^-) = (1 / \cR_{\nat}(f_{\vtheta}, \cD)) \cdot \overline{\cR}_{\hyp}(f_{\vtheta}, \cD)$, we absorb the term $\cR_{\nat}(f_{\vtheta}, \cD)$ into $\lambda$, and similarly use the KL divergence as the surrogate loss~\citep{zheng2016improving, zhang2019theoretically, wang2019improving} for the indicator function $\ind (f_{\vtheta}(\vx_{\hyp}) \neq f_{\vtheta}(\vx))$.
Based on the surrogate loss functions, we can state the final objective function for the adaptive robust training method:
\begin{equation*}
\textstyle
    \cL_{\thrm} = \underset{(\vx, y) \sim \cD}{\bbE}  \left[ \ce(\vp_{\vtheta}(\vx), y) + \lambda \cdot \kl(\vp_{\vtheta}(\vx) || \vp_{\vtheta}(\vx_{\hyp})) \right],
\end{equation*}
where $\vx_{\hyp}$ is approximately generated by minimizing the CE loss using PGD as in Section~\ref{sec:hyp-examples}. The first term in the above objective is the standard loss that maximizes the natural accuracy. The second term is also physically meaningful, which forces the predictive distributions of clean examples and hypocritical examples to be similar to each other, and thus encourages the model to robustly predict its failures on hypocritical examples.

We name the method THRM (Trade-off for Hypocritical Risk Minimization). This is largely because we notice that, similar to the trade-off between the natural risk and the adversarial risk~\citep{tsipras2018robustness, zhang2019theoretically}, the minimizers of the natural risk and the hypocritical risk are not necessarily consistent. Training models to be robust against hypocritical perturbations may lead to a reduction of natural accuracy. We illustrated this phenomenon by constructing toy examples in Appendix~\ref{app:toy-tradeoffs}.

\section{Details on $\linf$-dist Nets}
\label{app:linfnet-details}

In Section~\ref{sec:countermeasures-exps}, we empirically computed the hypocritical risk achieved by the PGD-based attack, which is actually a lower bound of the true hypocritical risk. This means that more hypocritical examples may be constructed by stronger hypocritical attacks. Unfortunately, exactly computing the true hypocritical risk for a standard DNN is very difficult, since computing the $\ell_p$ robust radius of a DNN is NP-hard~\citep{katz2017reluplex}. Therefore, we turn to seek for a tight upper bound of the true hypocritical risk. Similar to the definition of certified adversarial risk in previous works~\citep{raghunathan2018certified, wong2018provable, weng2018towards, zhang2018efficient, gowal2019scalable, salman2019convex, boopathy2019cnn, dathathri2020enabling, xu2020fast}, we define the \textit{certified hypocritical risk} as a provable upper bound on the achievable hypocritical risk by \textit{any} hypocritical attack. 

Next, we exemplify the certified hypocritical risk using a recent state-of-the-art certification method called $\linf$-dist nets~\citep{zhang2021towards}, which inherently resist $\linf$ perturbations by using $\linf$-distance as the basic operation.
An $L$ layer $\linf$-dist net is defined as $f(\vx) = \arg \max_{i\in [M]} g_i(\vx)$, in which
\begin{equation}
\label{eq:linf-net}
    \textstyle
    \vg(\vx) = (-x_1^{(L)}, -x_2^{(L)}, \ldots, -x_M^{(L)}),
    x_k^{(l)} = \| \vx^{l-1} - \vw^{(l,k)} \|_{\infty} + b^{(l,k)}, 1 \leq l \leq L, 1 \leq k \leq d_l,
\end{equation}
where $d_l$ is the number of units in the $l$-th layer, $\vx^{(0)}=\vx$ is the input, $\vx^{l}=(x_1^{(l)}, x_2^{(l)}, \cdots, x_{d_l}^{(l)})$ is the output of the $l$-th layer, $\vw^{(l,k)}$ and $b^{(l,k)}$ is the parameters of the $k$-th unit in the $l$-th layer.

The following corollary allows us to make use of the $\linf$-dist net to certify the hypocritical risk:
\begin{corollary}
\label{coro:certify-hyp-risk}
Given the $\linf$-dist net $f(\vx) = \arg \max_{i\in [M]} g_i(\vx)$ defined in Eq.~\ref{eq:linf-net} and the distribution of misclassified examples $\cD_f^-$ with respect to $f$, we have
$$
\textstyle
\cR_{\hyp}(f, \cD_f^-) \leq \bbE_{(\vx,y)\sim \cD_f^-} \left[\ind (\max_i g_i(\vx) - g_y(\vx) < 2 \epsilon) \right].
$$
\end{corollary}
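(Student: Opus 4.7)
The plan is to exploit Fact 3.4 (every $\linf$-dist net $\vg$ is 1-Lipschitz in the $\linf$-norm) to derive a pointwise certificate: for each misclassified pair $(\vx,y)\sim\cD_f^-$, show that a perturbation $\vx'$ with $\|\vx'-\vx\|_\infty\le\epsilon$ can flip the prediction to $y$ only when the logit gap $\max_i g_i(\vx)-g_y(\vx)$ is small (namely $<2\epsilon$). Taking expectation over $\cD_f^-$ of this pointwise inequality then yields the displayed bound, in direct analogy with the certified adversarial accuracy arguments of \citet{zhang2021towards} but applied to the misclassified side of the distribution.

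Concretely, first I would fix a misclassified point $(\vx,y)$ and let $i^\star=f(\vx)=\arg\max_i g_i(\vx)$, so that $i^\star\ne y$ and $g_{i^\star}(\vx)=\max_i g_i(\vx)$. Next, I would suppose there exists a hypocritical perturbation $\vx'$ with $\|\vx'-\vx\|_\infty\le\epsilon$ and $f(\vx')=y$, which forces $g_y(\vx')\ge g_{i^\star}(\vx')$. Then I would apply Fact 3.4 coordinatewise: since $\|\vg(\vx)-\vg(\vx')\|_\infty\le\|\vx-\vx'\|_\infty\le\epsilon$, we get both $g_y(\vx')\le g_y(\vx)+\epsilon$ and $g_{i^\star}(\vx')\ge g_{i^\star}(\vx)-\epsilon$. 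Chaining these three inequalities gives $g_{i^\star}(\vx)-g_y(\vx)\le 2\epsilon$, i.e., $\max_i g_i(\vx)-g_y(\vx)\le 2\epsilon$. The contrapositive is the pointwise certificate
\[
\max_{\|\vx'-\vx\|_\infty\le\epsilon}\ind(f(\vx')=y)\ \le\ \ind\!\left(\max_i g_i(\vx)-g_y(\vx)<2\epsilon\right),
\]
after which taking $\bbE_{(\vx,y)\sim\cD_f^-}$ on both sides, and noting that the left-hand side is exactly the integrand defining $\cR_{\hyp}(f,\cD_f^-)$, finishes the proof.

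I do not anticipate a serious obstacle here: the whole argument is a two-line application of the 1-Lipschitz property on two carefully chosen coordinates of $\vg$. The only minor subtlety is the boundary case when the margin equals exactly $2\epsilon$; using the strict inequality $<2\epsilon$ inside the indicator (as in the statement) makes the certificate hold with the $\le$ direction I derived, at the cost of possibly being loose only on a measure-zero set of equality. A second point worth being explicit about is that Fact 3.4 is stated for the full vector output $\vg$ in $\linf$-norm, so one should remark that $|g_i(\vx)-g_i(\vx')|\le\|\vg(\vx)-\vg(\vx')\|_\infty$ for each coordinate $i$ in order to legitimately apply it separately to the $y$-th and $i^\star$-th outputs.
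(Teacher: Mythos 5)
Your argument is correct and is essentially the paper's own proof: the paper likewise applies the 1-Lipschitz property of the $\linf$-dist net (Fact 3.4 of \citet{zhang2021towards}) to the largest logit and the $y$-th logit of a misclassified point, obtains the same pointwise margin certificate $\max_{\|\vx'-\vx\|_\infty\le\epsilon}\ind(f(\vx')=y)\le\ind\left(\max_i g_i(\vx)-g_y(\vx)<2\epsilon\right)$, and takes the expectation over $\cD_f^-$; the only cosmetic difference is that the paper packages your coordinatewise chaining as the lower bound $\mathrm{margin}(f;\vx,y)/2\le \mathrm{HR}(f;\vx,y)$ on a ``hypocritical radius.'' The knife-edge case $\max_i g_i(\vx)-g_y(\vx)=2\epsilon$ that you flag is treated no more carefully in the paper's proof, so it is not a gap relative to the paper.
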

The proof is provided in Appendix~\ref{app:proof-certify-hyp-risk}, which utilizes the 1-Lipschitz property of $\linf$-dist nets to guarantee the upper bound on the hypocritical risk.
Using this bound, we can certify the hypocritical risk of an $\linf$-dist net under $\linf$-norm perturbations with little computational cost (only one forward pass for each example).
Following the implementation in~\citet{zhang2021towards}, we use a 6-layer $\linf$-dist net for CIFAR-10. Other configurations and hyper-parameters are also the same as the original paper.

Results of the models trained on the \textsl{Quality} data and the \textsl{Poisoning} data are summarized in Table~\ref{tab:linf-net-results}.
We report both the empirical hypocritical risk achieved by the PGD-based attack and the certified hypocritical risk guaranteed by Corollary~\ref{coro:certify-hyp-risk}. The former is naturally lower than the latter. The true hypocritical risk will exist between the empirical and certified hypocritical risks.
For completeness, we additionally report the certified adversarial (hypocritical) accuracy, which is a strict lower bound (upper bound) on the achievable accuracy by any adversarial (hypocritical) attack.

As can be seen from Table~\ref{tab:linf-net-results}, the $\linf$-dist net achieves moderate certified hypocritical risk. For both the \textsl{Quality} model and the \textsl{Poisoning} model, nearly half of the errors are guaranteed not to be covered up by any attack.
On the other hand, the $\linf$-dist net seems to perform worse than TRADES and THRM in terms of empirical hypocritical risk.
For example, when training on the CIFAR-10 \textsl{Quality} data, the $\linf$-dist net has 43.34\% natural risk and 47.07\% empirical hypocritical risk; for comparison, TRADES with $\lambda=80$ achieves 33.47\% natural risk and 40.99\% empirical hypocritical risk.
Thus, the certified robustness against hypocritical perturbations is barely satisfactory;
in particular, the PGD-based attack can already conceal almost half of the errors, and perhaps more powerful attacks can conceal more than half of the errors.
Other advanced techniques (such as designing more effective architectures~\citep{huang2021exploring}, more principled initializations~\citep{shi2021fast}, and combination with other certification methods~\citep{zhang2019towards}) may help to achieve a better certified hypocritical risk.
We leave this as future work.

\begin{table*}[t]
\small
\centering
\vspace{-5px}
\caption{
Empirical and certified accuracy (\%) / risk (\%) of $\linf$-dist nets on CIFAR-10 under $\linf$ threat model.
$\cD$, $\cA$, and $\cF$ denote the model accuracy evaluated on clean examples, adversarially perturbed examples, and hypocritically perturbed examples, respectively.
$\cR_{\nat}$ and $\cR_{\hyp}$ denote the natural risk and the hypocritical risk on misclassified examples, respectively.
}
\label{tab:linf-net-results}
\begin{tabular}{@{}l|ccccc|ccc@{}}
\toprule
\multirow{2}{*}{Model} &
  \multirow{2}{*}{$\cD$} &
  \multicolumn{2}{c}{$\cA$} &
  \multicolumn{2}{c|}{$\cF$} &
  \multirow{2}{*}{$\cR_{\nat}$} &
  \multicolumn{2}{c}{$\cR_{\hyp}$} \\ \cmidrule(lr){3-4} \cmidrule(lr){5-6} \cmidrule(l){8-9} 
          &       & Empirical  & Certified  & Empirical  & Certified  &       & Empirical  & Certified  \\ \midrule
Poisoning & 55.62 & 36.36 & 31.87 & 76.57 & 83.36 & 44.38 & 47.25 & 62.51 \\
Quality   & 56.66 & 37.24 & 32.79 & 77.06 & 83.86 & 43.34 & 47.07 & 62.76 \\ \bottomrule
\end{tabular}
\vspace{-3px}
\end{table*}

\end{document}